\documentclass{article}

\IfFileExists{arxiv.sty}{%
  \usepackage{arxiv}%
}{%
  \usepackage[margin=1in]{geometry}%
}

\usepackage[utf8]{inputenc}
\usepackage[T1]{fontenc}
\usepackage{hyperref}
\usepackage{url}
\usepackage{booktabs}
\usepackage{amsmath,amssymb,amsfonts,mathtools}
\usepackage{amsthm}
\usepackage{nicefrac}
\usepackage{microtype}
\usepackage{graphicx}
\graphicspath{{./images/}}
\usepackage{bbm}
\usepackage{array}
\usepackage[ruled,vlined]{algorithm2e}
\usepackage{color}
\usepackage{colortbl}
\usepackage{enumitem}

\usepackage{booktabs}
\usepackage{multirow}

\hypersetup{colorlinks=true,linkcolor=blue,citecolor=blue,urlcolor=blue}

\newcolumntype{P}[1]{>{\centering\arraybackslash}p{#1}}
\newcolumntype{R}[1]{>{\raggedleft\arraybackslash}p{#1}}

\definecolor{c_diag}{rgb}{0.85,0.89,0.953}
\definecolor{t_diag}{rgb}{0,0,0}
\definecolor{c_ndiag}{rgb}{.95,0.95,0.95}
\definecolor{c_hlight}{rgb}{.886,0.9411,0.851}
\definecolor{c_hdark}{rgb}{.663,0.82,0.557}
\definecolor{c_rest}{rgb}{.984,0.898,0.839}

\newtheorem{definition}{Definition}
\newtheorem{assumption}{Assumption}
\newtheorem{lemma}{Lemma}
\newtheorem{proposition}{Proposition}
\newtheorem{theorem}{Theorem}
\newtheorem{corollary}{Corollary}

\DeclareMathOperator{\BC}{BC}
\DeclareMathOperator{\BD}{BD}

\newcommand{\Fcal}{\mathcal{F}}
\newcommand{\Lcal}{\mathcal{L}}
\newcommand{\Rcal}{\mathcal{R}}

\newcommand{\Ecal}{\mathcal{E}}

\newcommand{\Prob}{\mathbb{P}}

\newcommand{\R}{\mathbb{R}}

\title{A Calibrated Stopping Rule for Feature Subset Selection from Feature Rankings}
\title{When to Stop Selecting Features from Feature Rankings: A Residual-Overlap Stopping Rule}
\title{When to Truncate a Feature Ranking: A Residual-Overlap Stopping Rule for Subset Selection}

\author{
 Jes\'us S. Aguilar--Ruiz \\
  School of Engineering\\ Pablo de Olavide University\\ 
  ES-41013 Seville, Spain \\
  \texttt{aguilar@upo.es} \\
 }

\begin{document}
\maketitle

\begin{abstract}
Feature rankings are widely used in supervised feature selection because they are simple, scalable and easy to interpret. Variables are first ranked by a relevance score, and a subset is then obtained by retaining the top-ranked variables. Although the first stage has been extensively studied, the second is often governed by an arbitrary cardinality, an empirical threshold or cross-validation, without a direct interpretation. This raises a basic question: given a feature ranking, when is there enough accumulated class-separation evidence to stop selecting features?

To answer this question, this paper develops a distributional framework for transforming supervised feature rankings into class-independent subsets through an explicit risk-calibrated stopping rule. For each variable and each pair of classes, marginal separation is measured by the Bhattacharyya coefficient between the corresponding class-conditional distributions. Under the product marginal model induced by a selected subset, these variable-wise overlaps factorise exactly, yielding an additive log-overlap representation of accumulated marginal evidence. The proposed method selects a single global subset shared by all classes by retaining the shortest prefix of a ranking whose residual product overlap falls below a prescribed threshold for every relevant class contrast. We derive binary and multiclass Bayes-risk bounds for the labelled product marginal problem, obtain prior-dependent and prior-free calibrations of the residual-overlap threshold from a target all-pairs risk level, discuss plug-in estimation of the marginal overlaps and explicitly delimit the model-based calibration gap induced by dependence among selected variables.

An empirical comparison on high-dimensional genomic datasets illustrates that the rule can reduce tens of thousands of variables to a few dozen while maintaining predictive performance statistically comparable to the all-features baseline. As the stopping rule only requires one-dimensional marginal overlap estimates and scans a precomputed ranking, it is well suited to very high-dimensional settings where exhaustive subset search is infeasible and interpretable truncation of feature rankings is essential.

\end{abstract}

\noindent\textbf{Keywords:} feature selection; feature ranking; class-independent feature selection; class-specific feature selection; Bhattacharyya coefficient; Bayes error; residual overlap; high-dimensional classification.

\section{Introduction}\label{sec:introduction}

High-dimensional classification problems are now routine in bioinformatics, medical diagnosis, and other data-intensive scientific domains. In such problems the number of measured variables can be much larger than the number of observations. Learning directly from the full variable set may increase variance, reduce interpretability, and make the resulting model difficult to audit. Feature selection addresses these issues by replacing the original variable set with a smaller subset that is expected to preserve the discriminative information relevant to the supervised task \cite{Blum1997,Kohavi1997,Dash1997,Guyon2003,LiuYu2005,Saeys2007}.

A substantial part of feature-selection practice is based on rankings. Each variable is assigned a relevance score, for example by a statistical test \cite{Student1908}, a mutual-information criterion \cite{Peng2005}, a margin-based estimator such as ReliefF \cite{Robnik2003}, the magnitude or sparsity pattern of model coefficients as in the lasso \cite{Tibshirani1996}, impurity reduction as in random forests \cite{Breiman2001}, or a distributional separation index \cite{Xuan2006}. Variables are then ordered from most to least relevant. This formulation is attractive because it is simple, scalable and interpretable. Nevertheless, a ranking is not yet a subset. A ranked list must still be cut at some point, and this stopping decision is often less theoretically justified than the score used to construct the ranking. In applied work, the cut is frequently chosen by retaining a fixed number of variables, by taking a fixed percentage of the list, by applying wrapper strategies to the ranking \cite{Ruiz2006}, or by cross-validation. Such decisions may be operationally convenient, but they do not necessarily explain what the selected threshold means in probabilistic or statistical terms.

The multiclass setting adds a further distinction between the information used to assess variables and the representation ultimately returned by the selector. A conventional class-independent method returns a single subset of variables, shared by all classes; this is the natural output when the aim is to provide a common input space for downstream classifiers. Class-specific feature selection, by contrast, allows different classes to be represented by different subsets \cite{Oh1999,Pineda2011,AguilarRuiz2025}. 

In supervised feature selection, the assessment of a variable is usually based on class information: a feature is considered relevant because its distribution, association, or predictive behaviour changes across classes. This use of class-conditional evidence does not, by itself, make a method class-specific. The distinction depends on the form of the subset returned by the selector. A class-specific method returns $k$ different subsets for $k$ different classes, for example $S_1,\ldots,S_k$. A class-independent method, in contrast, returns one common subset $T\subseteq\mathcal F$, which is used as the shared representation for all $k$ classes.

The method proposed here is class-independent. Pairwise class-conditional overlaps are used to monitor whether all class contrasts have accumulated sufficient marginal separation under the product marginal model, but the selected representation remains one common subset shared by all classes.

This paper addresses the following question: given a supervised feature ranking, where should selection stop if the retained subset is to have a clear statistical interpretation? We focus on rankings whose variables can be evaluated through the separation of their class-conditional marginal distributions. This viewpoint is natural because supervised discrimination depends on how the distribution of a variable changes across classes. Distributional separation scores, including Bhattacharyya- and Wasserstein-type criteria, can produce informative rankings \cite{Xuan2006,CaoZhang2024}. However, they do not by themselves answer the separate question of where such rankings should be truncated.

We introduce a calibrated residual-overlap stopping principle. For each ranked variable and each relevant class contrast, the method estimates the class-conditional marginal overlap and accumulates these overlaps along the ranking under the product marginal model. Selection stops at the shortest prefix whose residual product overlap is below a prescribed threshold for all class contrasts simultaneously. The resulting threshold has a model-based Bayes-risk-calibration interpretation for an explicitly defined labelled product marginal problem. The main contribution is a calibrated truncation rule that converts an existing supervised ranking into a single class-independent subset of original variables, with an explicit residual-overlap interpretation under the labelled product marginal model.

\section{Related Work}\label{sec:related}

\subsection{Classical feature selection, ranking and subset extraction}

Feature selection has a long history in machine learning, statistics and pattern recognition. Standard taxonomies distinguish filter, wrapper and embedded methods \cite{Blum1997,Kohavi1997,Dash1997,Guyon2003,LiuYu2005, Chandrashekar2014}. Filter methods evaluate variables without committing to a particular downstream learner; wrapper methods search for subsets through repeated predictive evaluation; and embedded methods perform selection as part of the optimisation problem used to fit the classifier or regressor. In very high-dimensional domains, especially in bioinformatics \cite{Wellinger2022}, filtering and ranking methods remain attractive because they scale well and produce interpretable lists of candidate variables \cite{Saeys2007}.

Many classical filters assign a relevance score to each variable, either individually or through a greedy relevance--redundancy trade-off. Relief-type methods estimate feature relevance by comparing neighbouring examples from the same and different classes \cite{KiraRendell1992,Robnik2003}. Fast correlation-based filters remove irrelevant and redundant variables efficiently in high-dimensional spaces \cite{YuLiu2003}. Mutual-information filters, including minimum-redundancy maximum-relevance, seek variables that are strongly associated with the response while avoiding repeated information among selected variables \cite{Peng2005,Brown2012}. These methods are directly relevant to the present paper because they expose a distinction that is often hidden in feature-selection practice: a variable may be individually relevant, but its usefulness after other variables have been retained depends on the information already accumulated by the subset.

Embedded sparse models form a second major tradition. The lasso (least absolute shrinkage and selection operator) \cite{Tibshirani1996}, the elastic net \cite{ZouHastie2005}, sparse support vector machines and related regularised learners perform feature selection through penalty parameters or sparsity-inducing constraints. Stability selection \cite{Meinshausen2010} adds a resampling layer and provides finite-sample control for certain false-selection quantities. These approaches provide principled mechanisms for controlling model complexity, for example through regularisation, sparsity constraints, resampling or validation. However, the final number of selected variables is usually interpreted indirectly: it is the subset obtained at a particular value of a penalty parameter, for a particular predictive model, under a chosen stability threshold, or at the cross-validated optimum. It is therefore not usually attached to a direct distributional or risk-calibrated meaning.

\subsection{Class-independent and class-specific feature selection}

Most feature-selection methods are class-independent: they return a single subset used for all classes. This is appropriate when the selected variables are intended to define a single input space shared by all classes. The same subset is then used to train classifiers, evaluate test samples, report selected variables and perform validation, making the resulting representation common, reproducible and easy to compare across classes. In multiclass problems, however, different variables may be useful for different class contrasts. This observation motivates class-specific feature selection, where relevance is allowed to vary with the class used to solve the multiclass problem.

Early work by Oh, Lee and Suen \cite{Oh1999} studied class separation and the combination of class-dependent features for handwriting recognition, showing that the discriminative value of a feature may vary by class. Pineda-Bautista, Carrasco-Ochoa and Mart\'{i}nez-Trinidad \cite{Pineda2011} proposed a general framework for class-specific feature selection using one-against-all binarisation, so that different subsets may be assigned to different classes. Zhou and Dickerson \cite{ZhouDickerson2014} studied class-dependent feature selection for cancer biomarker discovery. Nardone, Ciaramella and Staiano \cite{Nardone2019} developed a sparse-modelling approach for class-specific feature selection. Das and Pal \cite{DasPal2022} embedded class-specific and rule-specific feature selection in a fuzzy rule-based framework with redundancy control. More recently, Aguilar-Ruiz \cite{AguilarRuiz2025} gave an explicit account of class-specific feature selection for explainability, relating it to one-versus-all and one-versus-each classification schemes and contrasting it with the class-independent paradigm.

The present paper uses pairwise class-conditional information to construct and calibrate a single subset shared by all classes. Class specificity concerns the form of the output representation, not the information used to score or monitor variables. In the proposed approach, all pairwise contrasts are monitored, but the final feature set remains class-independent. 

\subsection{Distributional separation between class-conditional laws}

A natural way to evaluate a supervised feature is to compare its class-conditional distributions. If the distribution of a variable is nearly the same across classes, the variable has little marginal ability to distinguish them; if the distributions are well separated, the variable carries marginal discriminative information. This distributional viewpoint underlies many ranking criteria, including statistical tests, divergence-based measures, kernel two-sample statistics, optimal transport and integral probability metrics.

Classical test-based filters rank variables through evidence of distributional or summary-level differences, as in the $t$-test \cite{Student1908}, the Mann--Whitney test \cite{MannWhitney1947} or the Kolmogorov--Smirnov statistic \cite{Massey1951}. Divergence-based filters replace testing by a numerical measure of discrepancy or affinity, such as Kullback--Leibler divergence \cite{KullbackLeibler1951}, Jensen--Shannon divergence \cite{Lin1991}, Hellinger distance or Bhattacharyya affinity \cite{Bhattacharyya1943}. Kernel methods provide another non-parametric route: maximum mean discrepancy compares distributions through their mean embeddings in a reproducing-kernel Hilbert space \cite{Gretton2012}. Optimal-transport criteria compare distributions geometrically, with Wasserstein distances providing a prominent example \cite{PeyreCuturi2019}. Recent work has used such distances between class-conditional distributions, including Wasserstein-based variants and robustness-motivated extensions \cite{CaoZhang2024,Li2024}. Integral probability metrics offer a common abstraction for several of these approaches, since they compare distributions through the largest difference of expectations over a chosen class of test functions \cite{Mueller1997}.

Bhattacharyya and Hellinger quantities are particularly relevant here because they measure distributional overlap and are directly connected with classical Bayes-error bounds. This connection has motivated feature-selection procedures based on the Bhattacharyya distance \cite{Xuan2006}, stable sparse selection based on the Hellinger distance \cite{Fu2020}, and renewed analyses of Bhattacharyya and Chernoff upper bounds on Bayes error \cite{Nielsen2014}. These works support the broader premise that supervised feature evaluation can be based on distances or affinities between class-conditional laws.

The present work belongs to this distributional family, but addresses a different question. It does not propose a new distributional score to optimise over subsets. Instead, it estimates variable-wise overlaps between class-conditional marginal laws and uses their product-model accumulation to decide where a given ranking should be truncated. The methodological emphasis therefore shifts from choosing the score that defines the ranking to assigning the stopping point a calibrated residual-overlap interpretation.

\subsection{Bhattacharyya coefficient, Hellinger distance and Bayes-error bounds}

The Bhattacharyya coefficient is a classical affinity between probability distributions \cite{Bhattacharyya1943}. Its negative logarithm, the Bhattacharyya distance, has long been used in signal selection, statistical pattern recognition and classification theory \cite{Kailath1967,Fukunaga1990}. The coefficient is closely related to the squared Hellinger distance and to upper bounds on the Bayes error. In binary classification, the elementary inequality between a minimum and a geometric mean yields a Bhattacharyya upper bound on the Bayes risk. Chernoff bounds generalise this mechanism by optimising over a family of affinities. Nielsen \cite{Nielsen2014} reviews and extends Bhattacharyya and Chernoff upper bounds, emphasising their usefulness when the exact Bayes error is analytically intractable.

Bhattacharyya-type quantities have also been used directly for feature selection. Xuan et al. \cite{Xuan2006} proposed a Bhattacharyya-distance-based method for multiclass feature selection, using a recursive algorithm to obtain a dimension-reduction matrix that minimises an upper bound on classification error under normal distributions. Their work is closely related in spirit because it exploits the connection between Bhattacharyya distance and error bounds. The present paper differs in three respects. First, it works with variable-wise marginal overlaps rather than a multivariate Gaussian dimension-reduction matrix. Secondly, it uses the exact factorisation of Bhattacharyya affinity under an explicitly stated product marginal model. Thirdly, it interprets the resulting product overlap as a residual quantity used to decide when a ranked list should stop.

\subsection{Stopping rules, stability and calibration}

A recurring weakness of ranking-based feature selection is that the ranking criterion and the subset size are often justified separately. Cross-validation can tune the number of retained variables, but the resulting cardinality may be unstable when the sample size is small, the feature space is large, or many features have similar scores. Stability selection provides a reproducibility-oriented solution by controlling expected false selections under subsampling \cite{Meinshausen2010}. Redundancy-aware criteria such as mRMR control one source of over-selection by discouraging repeated information \cite{Peng2005}. Graph-based and differentiable methods can incorporate redundancy, manifold structure, sparsity or gates into the learning objective \cite{Roffo2021,Zhang2022,Huang2024,Wang2021,Lindenbaum2021,Masoomi2020}. Nevertheless, these approaches address questions that are distinct from the one considered here.

The contribution of this paper lies between distributional feature ranking and error-bound calibration. The method assumes that class-conditional marginal overlaps can be estimated and uses their product-model accumulation to decide where a ranked list should be truncated. The resulting threshold has a direct interpretation for the labelled product marginal problem, making the underlying model assumption explicit.

\section{Foundations}\label{sec:foundations}

Let $Y\!\in\!\Lcal\!=\!\{c_1,\ldots,c_k\}$ be a class label with class prior probabilities $\pi_i\!=\!\Prob(Y\!=\!c_i), i\!=\!1,\ldots,k$, and let $X\!=\!(X_1,\ldots,X_m)$ be a vector of $m$ variables indexed by $\Fcal=\{1,\ldots,m\}$. In a sample of size $n$, the observed data may be written as $\{(x_r,y_r):r=1,\ldots,n\}$, where $x_r\in\R^m$ and $y_r\in\Lcal$. For $T\subseteq\Fcal$, $X_T$ denotes the subvector of $X$ containing the variables indexed by $T$.

\begin{definition}[Class-independent and class-specific selectors]
A class-independent feature selector returns a single subset $T\subseteq\Fcal$ that is used for all classes. A class-specific selector may instead return a map $S:\Lcal\to 2^{\Fcal}, c_i\mapsto S_i$, or pairwise subsets $T_{ij}\subseteq\Fcal, 1\le i<j\le k$. 
\end{definition}
The method in this paper is class-independent: its principal output is one global subset $T$. Pairwise quantities are used only to measure and control discriminative overlap across class contrasts.

\begin{definition}[Feature ranking]
A feature ranking is a permutation $\mathcal R=(v(1),\ldots,v(m))$ of $\mathcal F$. It may be induced by a score $s:\mathcal F\to\mathbb R$, where larger values indicate greater relevance, so that
\begin{equation*}
s(v(1))\ge s(v(2))\ge \cdots \ge s(v(m)).
\end{equation*}
The ranking may be produced by any scoring mechanism. The method below does not require any particular ranking construction; it requires only a ranking and estimates of the residual overlaps used for stopping.
\end{definition}

For class $c_i$ and variable $v$, let $P_{i,v}$ denote the class-conditional marginal law of $X_v\!\mid\! Y\!=\!c_i$. We assume that, for each $v$, the distributions $P_{1,v},\ldots,P_{k,v}$ are dominated by a common $\sigma$-finite measure $\mu_v$, and write $f_{i,v}$ for the corresponding density or mass function. The unconditional marginal distribution of $X_v$ is the mixture
\begin{equation}
P_v=\sum_{i=1}^k\pi_i P_{i,v}.
\label{eq:marginal_mixture}
\end{equation}

Consequently, an unconditional feature distribution may be multimodal even when each class-conditional marginal distribution is unimodal, for example Gaussian. The proposed criterion does not model the unconditional marginal distribution in \eqref{eq:marginal_mixture}; it measures separation through the overlaps of the class-conditional marginals.

\begin{definition}[Bhattacharyya coefficient]
Let $P$ and $Q$ be probability distributions dominated by a common measure $\mu$, with densities $p$ and $q$. Their Bhattacharyya coefficient is
\begin{equation}
\BC(P,Q)=\int \sqrt{p(x)q(x)}\,d\mu(x).
\label{eq:bc_def}
\end{equation}
The Bhattacharyya distance is
\begin{equation}
\BD(P,Q)=-\log \BC(P,Q),
\label{eq:bd_def}
\end{equation}
with the convention $-\log 0=+\infty$.
\end{definition}

For a variable $v$ and a class pair $i<j$, define
\begin{equation}
\beta_v^{ij}=\BC(P_{i,v},P_{j,v}),
\qquad
w_v^{ij}=-\log \beta_v^{ij}.
\label{eq:beta_weight}
\end{equation}
The quantity $\beta_v^{ij}$ measures marginal overlap for the contrast $(c_i,c_j)$, whereas $w_v^{ij}$ measures additive marginal log-separation. Products of the $\beta_v^{ij}$ coefficients define residual product overlap for selected subsets. If $\beta_v^{ij}=0$, the two class-conditional marginals are mutually singular and the variable alone eliminates product-model overlap for that contrast.

For a subset $T\subseteq\Fcal$, define the product marginal model for class $c_i$ by
\begin{equation}
P_{i,T}^{\otimes}=\bigotimes_{v\in T}P_{i,v}.
\label{eq:product_model}
\end{equation}
This is not necessarily the true joint distribution of $X_T\!\mid\! Y\!=\!c_i$. It is the distribution obtained by keeping the class-conditional marginal law of each selected variable and replacing their joint dependence structure within class $c_i$ by conditional independence. If the true joint class-conditional law is denoted by $P_{i,T}=\mathcal{L}(X_T\mid Y=c_i)$, then generally $P_{i,T}\ne P_{i,T}^{\otimes}$.

The product construction in \eqref{eq:product_model} is the same modelling device that underlies naive Bayes classifiers, one of the most widely used probabilistic baselines in supervised learning. Naive Bayes replaces the unknown joint class-conditional distribution by the product of its marginal class-conditional distributions, thereby accumulating feature-wise evidence through a transparent product rule. Equality in \eqref{eq:product_model} holds under conditional independence of the selected variables within class $c_i$, but the proposed method does not require that equality as an empirical claim. Instead, it adopts the product model as an explicit working model for calibrated marginal evidence accumulation. In this sense, the residual-overlap rule inherits the interpretability of naive Bayes while using the product construction only to define a model-based stopping criterion.

For $i<j$, define the product-model residual overlap of a subset $T$ by
\begin{equation}
B_{ij}(T)=\BC(P_{i,T}^{\otimes},P_{j,T}^{\otimes}).
\label{eq:bij_def}
\end{equation}

\begin{lemma}[Factorisation of product Bhattacharyya overlap]
\label{lem:factorisation}
For any subset $T\subseteq\Fcal$ and any pair $i<j$,
\begin{equation}
B_{ij}(T)=\prod_{v\in T}\beta_v^{ij}.
\label{eq:factorisation}
\end{equation}
Equivalently,
\begin{equation}
-\log B_{ij}(T)=\sum_{v\in T} w_v^{ij}.
\label{eq:additive_overlap}
\end{equation}
\end{lemma}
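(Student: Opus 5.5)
The plan is to compute $B_{ij}(T)$ directly from the definition \eqref{eq:bc_def}, applied to the product measures in \eqref{eq:product_model}, and then reduce the resulting integral with Tonelli's theorem.

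First, fix the dominating measure. Each $P_{i,v}$ and $P_{j,v}$ is dominated by the $\sigma$-finite measure $\mu_v$, with densities $f_{i,v}$ and $f_{j,v}$. Hence $P^{\otimes}_{i,T}$ and $P^{\otimes}_{j,T}$ are both dominated by the product measure $\mu_T=\bigotimes_{v\in T}\mu_v$, which is again $\sigma$-finite because $T$ is finite. Their densities with respect to $\mu_T$ are the products $f_{i,T}(x_T)=\prod_{v\in T}f_{i,v}(x_v)$ and $f_{j,T}(x_T)=\prod_{v\in T}f_{j,v}(x_v)$. To justify this, I would check it on measurable rectangles and extend by the uniqueness of product measures.

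Second, substitute these densities into \eqref{eq:bc_def}. Since the square root of a product of nonnegative numbers is the product of the square roots, the integrand becomes
\[
\sqrt{f_{i,T}(x_T)\,f_{j,T}(x_T)}=\prod_{v\in T}\sqrt{f_{i,v}(x_v)f_{j,v}(x_v)}.
\]
This is a nonnegative product of functions, each depending on a single coordinate. Tonelli's theorem therefore applies without any integrability hypothesis, and the integral over $\mu_T$ factorises into $\prod_{v\in T}\int\sqrt{f_{i,v}f_{j,v}}\,d\mu_v=\prod_{v\in T}\beta_v^{ij}$. I would also note that the Bhattacharyya coefficient does not depend on the choice of dominating measure. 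This follows from the standard change-of-measure argument: write the densities with respect to a common dominating measure and use Radon--Nikodym derivatives. Consequently, using $\mu_T$ here is consistent with the definition \eqref{eq:bij_def}. For the empty subset $T=\emptyset$, I would use the convention that the empty product equals $1$, which matches the Bhattacharyya coefficient of two copies of the trivial one-point distribution.

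Third, take $-\log$ of \eqref{eq:factorisation} to obtain \eqref{eq:additive_overlap}, using \eqref{eq:beta_weight}. If some $\beta_v^{ij}=0$, both sides equal $+\infty$ under the convention $-\log 0=+\infty$, with all terms lying in $[0,+\infty]$ since each $\beta_v^{ij}\le 1$ by Cauchy--Schwarz. No step is a real obstacle. The only point needing care is the measure-theoretic justification that product densities are densities of the product law, together with the appeal to Tonelli rather than Fubini, which handles possibly non-integrable or zero factors cleanly.
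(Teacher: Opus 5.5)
Your proposal is correct and follows essentially the same route as the paper: write the product densities with respect to $\mu_T=\bigotimes_{v\in T}\mu_v$, split the square root of the product into a product of square roots, factorise the integral by Tonelli's theorem, and then take negative logarithms. The paper leaves several points implicit, and your extra justifications of them are sound: that the product of marginal densities is a density of the product law, that the coefficient does not depend on the dominating measure, the empty-product convention, and the $\beta_v^{ij}=0$ case.
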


\begin{proof}
Let $f_{i,v}$ and $f_{j,v}$ be densities of $P_{i,v}$ and $P_{j,v}$ with respect to $\mu_v$, and let $\mu_T=\bigotimes_{v\in T}\mu_v$. Under the product model,
\begin{equation*}
f_{i,T}^{\otimes}(x_T)
=
\prod_{v\in T}f_{i,v}(x_v),
\qquad
f_{j,T}^{\otimes}(x_T)
=
\prod_{v\in T}f_{j,v}(x_v).
\end{equation*}
Therefore,
\begin{align*}
\operatorname{BC}(P_{i,T}^{\otimes},P_{j,T}^{\otimes})
&=
\int
\sqrt{
f_{i,T}^{\otimes}(x_T)
f_{j,T}^{\otimes}(x_T)
}
\,d\mu_T(x_T) \nonumber\\
&=
\int
\prod_{v\in T}
\sqrt{f_{i,v}(x_v)f_{j,v}(x_v)}
\,d\mu_T(x_T).
\end{align*}
The integrand is non-negative and measurable, so Tonelli's theorem \cite{Folland1999} gives
\begin{align*}
\operatorname{BC}(P_{i,T}^{\otimes},P_{j,T}^{\otimes})
&=
\prod_{v\in T}
\int
\sqrt{f_{i,v}(x_v)f_{j,v}(x_v)}
\,d\mu_v(x_v) \nonumber\\
&=
\prod_{v\in T}\beta_v^{ij}.
\end{align*}
Taking negative logarithms gives \eqref{eq:additive_overlap}.
\end{proof}

\subsection{Bayes-risk bounds}

For two classes $c_i$ and $c_j$, consider the binary product-model problem obtained by replacing the joint class-conditional laws on $T$ by $P_{i,T}^{\otimes}$ and $P_{j,T}^{\otimes}$. We use the original class priors $\pi_i$ and $\pi_j$, rather than renormalising them over the pair. The resulting pairwise Bayes-error contribution is
\begin{equation}
R_{ij}^{\star,\otimes}(T)
=
\int
\min\{
\pi_i f_{i,T}^{\otimes}(x),
\pi_j f_{j,T}^{\otimes}(x)
\}
\,d\mu_T(x),
\label{eq:pairwise_bayes_contribution}
\end{equation}
where $f_{i,T}^{\otimes}$ and $f_{j,T}^{\otimes}$ are densities of $P_{i,T}^{\otimes}$ and $P_{j,T}^{\otimes}$ with respect to the product measure $\mu_T$. This quantity is the Bayes risk contribution associated with the binary contrast $(c_i,c_j)$ under the product marginal model. 

\begin{lemma}[Pairwise Bhattacharyya bound]
\label{lem:pairwise_bound}
For any $T\subseteq\Fcal$ and $i<j$,
\begin{equation}
R_{ij}^{\star,\otimes}(T)
\le
\sqrt{\pi_i\pi_j}\,B_{ij}(T).
\label{eq:pairwise_bound}
\end{equation}
\end{lemma}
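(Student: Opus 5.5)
The plan is to bound the integrand of \eqref{eq:pairwise_bayes_contribution} pointwise and then integrate. For non-negative reals $a,b$ we have $\min\{a,b\}\le\sqrt{ab}$. This holds because if $a\le b$ then $a=\sqrt{a\cdot a}\le\sqrt{ab}$, and symmetrically if $b\le a$. Apply this with $a=\pi_i f_{i,T}^{\otimes}(x)$ and $b=\pi_j f_{j,T}^{\otimes}(x)$. For every $x$ this gives
\begin{equation*}
\min\{\pi_i f_{i,T}^{\otimes}(x),\pi_j f_{j,T}^{\otimes}(x)\}\le\sqrt{\pi_i\pi_j}\,\sqrt{f_{i,T}^{\otimes}(x)f_{j,T}^{\otimes}(x)}.
\end{equation*}

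Next, integrate both sides with respect to $\mu_T$. Both sides are non-negative and measurable, so monotonicity of the integral applies without any integrability concerns. The left side becomes $R_{ij}^{\star,\otimes}(T)$. The right side becomes $\sqrt{\pi_i\pi_j}\,\BC(P_{i,T}^{\otimes},P_{j,T}^{\otimes})$ by \eqref{eq:bc_def}, which equals $\sqrt{\pi_i\pi_j}\,B_{ij}(T)$ by \eqref{eq:bij_def}. If one also wants the product form, Lemma~\ref{lem:factorisation} rewrites this as $\sqrt{\pi_i\pi_j}\prod_{v\in T}\beta_v^{ij}$.

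The only point needing care is that $f_{i,T}^{\otimes}$ and $f_{j,T}^{\otimes}$ are genuine densities of the product laws with respect to the common dominating measure $\mu_T=\bigotimes_{v\in T}\mu_v$. This holds because each $\mu_v$ is $\sigma$-finite, so the product measure exists and the product of the marginal densities is a density of the product law. The same setup was already used in the proof of Lemma~\ref{lem:factorisation}.

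The degenerate case $T=\emptyset$ should be treated as trivial under the convention that the empty product equals $1$ and $\mu_T$ is a point mass. Then $R_{ij}^{\star,\otimes}=\min\{\pi_i,\pi_j\}\le\sqrt{\pi_i\pi_j}$, which is consistent with the claimed bound. No step here is a real obstacle; the argument is the elementary min–geometric-mean inequality followed by integration.
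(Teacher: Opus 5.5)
Your proof is correct and follows the paper's own argument essentially verbatim. Both apply $\min\{a,b\}\le\sqrt{ab}$ pointwise to the integrand, integrate against $\mu_T$, and identify the result with $\sqrt{\pi_i\pi_j}\,\BC(P_{i,T}^{\otimes},P_{j,T}^{\otimes})=\sqrt{\pi_i\pi_j}\,B_{ij}(T)$. Your added remarks on the product dominating measure and the empty-subset case are accurate but go slightly beyond what the paper spells out.
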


\begin{proof}
For any non-negative $a$ and $b$, $\min\{a,b\}\le \sqrt{ab}$. Applying this inequality to the integrand in \eqref{eq:pairwise_bayes_contribution} gives
\begin{align*}
R_{ij}^{\star,\otimes}(T)
&\le
\int
\sqrt{
\pi_i f_{i,T}^{\otimes}(x)
\pi_j f_{j,T}^{\otimes}(x)
}
\,d\mu_T(x) \nonumber\\
&=
\sqrt{\pi_i\pi_j}
\int
\sqrt{
f_{i,T}^{\otimes}(x)
f_{j,T}^{\otimes}(x)
}
\,d\mu_T(x) \nonumber\\
&=
\sqrt{\pi_i\pi_j}\,
\operatorname{BC}
\left(
P_{i,T}^{\otimes},
P_{j,T}^{\otimes}
\right).
\end{align*}
By the product-model factorisation of the Bhattacharyya coefficient,
\begin{equation*}
\operatorname{BC}
\left(
P_{i,T}^{\otimes},
P_{j,T}^{\otimes}
\right)
=
B_{ij}(T).
\end{equation*}
Hence
\begin{equation*}
R_{ij}^{\star,\otimes}(T)
\le
\sqrt{\pi_i\pi_j}\,B_{ij}(T),
\end{equation*}
as claimed.
\end{proof}

\section{Method}\label{sec:method}

\subsection{Objective} \label{subsec:objective}

The goal is to select a single class-independent subset $T\subseteq\Fcal$ whose residual overlap is controlled for all relevant class contrasts. For each pair $(c_i,c_j)$, let $\theta_{ij}\in(0,1]$ denote the residual-overlap threshold assigned to that contrast. A subset $T$ is called pairwise calibrated if
\begin{equation}
B_{ij}(T)
\le
\theta_{ij},
\qquad
1\le i<j\le k.
\label{eq:pairwise_calibrated}
\end{equation}
Thus $\theta_{ij}$ specifies the maximum residual product overlap allowed for the pair $(c_i,c_j)$. The general formulation allows a different residual-overlap threshold $\theta_{ij}\in(0,1]$ for each class contrast. This makes it possible to assign stricter tolerances to pairs that are scientifically or operationally more important, or to contrasts that are known to be difficult. For simplicity, and unless stated otherwise, we use the uniform-threshold case $\theta_{ij}=\theta$ for all $i<j$. This default choice gives a single class-independent residual-overlap threshold while retaining the pairwise formulation needed for calibration and diagnostics.

Thus, in the uniform-threshold case, this reduces to
\begin{equation}
B_{ij}(T)
\le
\theta,
\qquad
1\le i<j\le k.
\end{equation}

\subsection{Class-independent calibrated prefix}

Let $\Rcal=(v_{(1)},\ldots,v_{(m)})$ be a ranking. For $q=0,1,\ldots,m$, define the prefix $T_q(\Rcal)=\{v_{(1)},\ldots,v_{(q)}\}$, with $T_0(\Rcal)=\varnothing$. Given a uniform residual-overlap threshold $\theta\in(0,1]$, the class-independent calibrated prefix length is
\begin{equation}
q^\star
=
\min\left\{
q\in\{0,\ldots,m\}:
B_{ij}(T_q(\Rcal))\le \theta
\text{ for all }1\le i<j\le k
\right\},
\label{eq:calibrated_prefix}
\end{equation}
provided the set is non-empty. The selected subset is
\begin{equation}
T^\star=T_{q^\star}(\Rcal).
\label{eq:class_independent_subset}
\end{equation}

At the selected prefix length $q^\star$, the stopping condition is equivalently
\begin{equation}
\sum_{\ell=1}^{q^\star}w_{v_{(\ell)}}^{ij}\ge -\log\theta
\quad
\text{for all }1\le i<j\le k.
\label{eq:log_stopping}
\end{equation}
Thus the method stops once the accumulated log-separation reaches the prescribed accumulated log-separation level for every class contrast.

For $q=0$, the selected prefix is empty, $T_0(\Rcal)=\varnothing$. Since $B_{ij}(T)$ is defined as a product over the features in $T$, the empty-product convention gives $B_{ij}(T_0(\Rcal))=\prod_{v\in\varnothing}\beta_v^{ij}=1$. Thus, before any feature is selected, the residual overlap is maximal; in particular, if $\theta<1$, the stopping rule cannot stop at $q=0$. This is consistent with the calibration principle, since a non-trivial residual-overlap threshold can only be reached after at least one selected feature contributes positive log-separation.

\begin{proposition}[Feasibility of a calibrated prefix] \label{prop:feasibility}
For a fixed ranking $\Rcal=(v_{(1)},\ldots,v_{(m)})$ and a uniform residual-overlap threshold $\theta\in(0,1]$, the calibrated prefix length $q^\star$ in \eqref{eq:calibrated_prefix} is well defined if and only if
\begin{equation}
B_{ij}(T_m(\Rcal))
\le
\theta
\qquad
\text{for all }1\le i<j\le k.
\label{eq:prefix_feasibility}
\end{equation}
Equivalently,
\begin{equation}
\sum_{\ell=1}^{m}w_{v_{(\ell)}}^{ij}
\ge
-\log\theta
\qquad
\text{for all }1\le i<j\le k.
\label{eq:prefix_feasibility_log}
\end{equation}
\end{proposition}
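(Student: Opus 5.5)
The plan is to reduce well-definedness of $q^\star$ to non-emptiness of the index set in \eqref{eq:calibrated_prefix}, and to obtain that from monotonicity of the residual overlaps along the prefixes. Since the set is a subset of the finite set $\{0,\ldots,m\}$, it has a minimum exactly when it is non-empty. So it suffices to show that the set is non-empty if and only if $q=m$ belongs to it, which is precisely condition \eqref{eq:prefix_feasibility}.

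The key step is monotonicity. By Lemma~\ref{lem:factorisation}, $B_{ij}(T_q(\Rcal))=\prod_{\ell=1}^{q}\beta_{v_{(\ell)}}^{ij}$, with the empty-product convention giving $B_{ij}(T_0(\Rcal))=1$. Each $\beta_v^{ij}$ lies in $[0,1]$: it is non-negative as the integral of a non-negative function, and the Cauchy--Schwarz inequality gives
\begin{equation*}
\beta_v^{ij}\le\Bigl(\int f_{i,v}\,d\mu_v\Bigr)^{1/2}\Bigl(\int f_{j,v}\,d\mu_v\Bigr)^{1/2}=1 .
\end{equation*}
Hence $B_{ij}(T_{q+1}(\Rcal))=\beta_{v_{(q+1)}}^{ij}\,B_{ij}(T_q(\Rcal))\le B_{ij}(T_q(\Rcal))$, so $q\mapsto B_{ij}(T_q(\Rcal))$ is non-increasing for every pair $i<j$.

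For the two directions of the equivalence:
\begin{itemize}
\item If \eqref{eq:prefix_feasibility} holds, then $m$ lies in the set, so the set is non-empty and $q^\star$ is well defined.
\item Conversely, if some $q$ lies in the set, then by monotonicity $B_{ij}(T_m(\Rcal))\le B_{ij}(T_q(\Rcal))\le\theta$ for all pairs, which is \eqref{eq:prefix_feasibility}.
\end{itemize}

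The logarithmic form \eqref{eq:prefix_feasibility_log} follows by applying $-\log$, which is strictly decreasing on $[0,1]$, to both sides, together with \eqref{eq:additive_overlap}. Some care is needed when some $\beta^{ij}_v=0$. Then $w^{ij}_v=+\infty$, the sum is $+\infty$, and the condition holds on both sides. Since $\theta>0$, the equivalence $B\le\theta\iff -\log B\ge-\log\theta$ remains valid under the convention $-\log 0=+\infty$.

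There is no real obstacle. The only point requiring attention is the bound $\beta_v^{ij}\le 1$, which drives the monotonicity, together with the handling of zero overlaps in the logarithmic restatement.
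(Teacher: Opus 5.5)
Your proposal is correct and follows essentially the same route as the paper. Each $\beta_v^{ij}\in[0,1]$ makes $B_{ij}(T_q(\Rcal))$ non-increasing along the nested prefixes, so well-definedness reduces to $q=m$ lying in the finite feasible set, and the logarithmic form follows from the factorisation under the extended-real convention; your Cauchy--Schwarz proof of $\beta_v^{ij}\le 1$ and explicit treatment of zero overlaps fill in details the paper only asserts.
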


\begin{proof}
The prefixes are nested: $T_0(\Rcal)\subseteq T_1(\Rcal)\subseteq\cdots\subseteq T_m(\Rcal)$. Since every coefficient $\beta_v^{ij}$ lies in $[0,1]$, each residual overlap
$B_{ij}(T_q(\Rcal))$ is non-increasing in $q$. Therefore, if some prefix $T_q(\Rcal)$ satisfies the threshold condition $B_{ij}(T_q(\Rcal))\le\theta$ for all $i<j$, then the full prefix $T_m(\Rcal)$ also satisfies it. Conversely, if $T_m(\Rcal)$ satisfies all pairwise threshold conditions, then the feasible set in \eqref{eq:calibrated_prefix} is non-empty. Since this set is a finite subset of $\{0,\ldots,m\}$, it has a minimum, and hence $q^\star$ is well defined.

The logarithmic formulation follows from $B_{ij}(T_m(\Rcal))=\prod_{\ell=1}^m\beta_{v_{(\ell)}}^{ij}$ and $w_v^{ij}=-\log\beta_v^{ij}$, with the usual extended-real convention when $\beta_v^{ij}=0$.
\end{proof}

Along a fixed ranking, the residual overlaps $B_{ij}(T_q(\Rcal))$ are non-increasing in $q$, since each additional feature contributes a multiplicative factor in $[0,1]$. Therefore, $T_m(\Rcal)$ is the strongest prefix available under the given ranking. If $T_m(\Rcal)$ does not satisfy the threshold condition for some pair $(c_i,c_j)$, no shorter prefix can do so. Failure of feasibility indicates that the prescribed tolerance is unattainable from the available ranked features for at least one class contrast.

When the feasibility condition holds, the stopping rule selects the smallest prefix length at which all residual overlaps fall below the common threshold $\theta$, or equivalently, all accumulated log-separations reach $-\log\theta$. The pairs for which $B_{ij}(T_m(\Rcal))>\theta$ identify the bottleneck contrasts, namely those that remain insufficiently separated even after all ranked features have been included.

\begin{proposition}[Binary cardinality optimality] \label{prop:binary_cardinality}
Assume $k=2$ and suppose that the ranking $\Rcal$ contains all candidate features and is ordered so that
\begin{equation}
w_{v_{(1)}}^{12}
\ge
w_{v_{(2)}}^{12}
\ge
\cdots
\ge
w_{v_{(m)}}^{12}.
\end{equation}
If the feasible set in \eqref{eq:calibrated_prefix} is non-empty, then $T^\star$ has minimum cardinality among all subsets $T\subseteq\Fcal$ satisfying
\begin{equation}
B_{12}(T)\le \theta.
\end{equation}
\end{proposition}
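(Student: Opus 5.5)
By Lemma~\ref{lem:factorisation}, the constraint $B_{12}(T)\le\theta$ is equivalent to the additive condition
\[
W(T):=\sum_{v\in T}w_v^{12}\ \ge\ -\log\theta .
\]
This holds in the extended reals, with the convention $-\log 0=+\infty$. The problem therefore becomes: find a minimum-cardinality subset whose total weight reaches a fixed level $\tau=-\log\theta\in[0,\infty)$, where all weights lie in $[0,+\infty]$. The plan is the standard exchange argument that a greedy choice of the largest weights is optimal for a cardinality-constrained knapsack covering problem with no costs.

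\textbf{Step 1 (top-$q$ prefixes maximise weight).} For every $q\in\{0,\ldots,m\}$ and every $T\subseteq\Fcal$ with $|T|=q$, we show $W(T)\le W(T_q(\Rcal))$. List the elements of $T$ in decreasing order of weight as $u_1,\ldots,u_q$. Since the ranking is sorted by $w^{12}$ and contains all features, the $\ell$-th largest weight in $T$ is at most the $\ell$-th largest weight overall, so $w^{12}_{u_\ell}\le w^{12}_{v_{(\ell)}}$ for each $\ell$. The justification is that at least $\ell$ features have weight $\ge w_{u_\ell}$. Summing over $\ell$ gives the claim. Infinite weights need a little care, but they cause no trouble: if $W(T)=+\infty$, then $T$ contains a feature of infinite weight. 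Because the ranking is sorted, $w_{v_{(1)}}=+\infty$, so $W(T_q)=+\infty$ as well whenever $q\ge1$.

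\textbf{Step 2 (conclude).} Let $T$ be any feasible subset and set $q=|T|$. By Step 1, $W(T_q(\Rcal))\ge W(T)\ge\tau$, so the prefix $T_q(\Rcal)$ is feasible. By the definition \eqref{eq:calibrated_prefix} of $q^\star$ as the minimum feasible prefix length, this gives $q^\star\le q=|T|$. Since $|T^\star|=q^\star$ and $T^\star$ itself satisfies $B_{12}(T^\star)\le\theta$, the minimality follows. The hypothesis that the feasible set is non-empty ensures that $q^\star$ exists, via Proposition~\ref{prop:feasibility}.

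The only delicate point is Step 1's elementwise domination argument together with the extended-real bookkeeping. I would handle it by the sorted-order comparison above rather than by a swap induction, treating the case where some weight is $+\infty$ separately as described.
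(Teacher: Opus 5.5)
Your proof is correct and follows essentially the paper's argument. Both reduce the constraint to the additive condition $\sum_{v\in T}w_v^{12}\ge-\log\theta$, use that the top-$q$ prefix maximises total weight among subsets of size $q$, and conclude from the minimality of $q^\star$; you phrase the last step contrapositively and add an explicit order-statistic justification and extended-real bookkeeping for the prefix-maximality claim, which the paper simply asserts.
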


\begin{proof}
The constraint $B_{12}(T)\le\theta$ is equivalent to
\begin{equation*}
\sum_{v\in T}w_v^{12}
\ge
-\log\theta.
\end{equation*}
Let $\tau=-\log\theta$. Among all subsets of cardinality $q$, the ordered prefix $T_q(\Rcal)$ maximises the sum of the weights $\sum_{v\in T}w_v^{12}$, because the weights are arranged in non-increasing order. By definition, $q^\star$ is the smallest prefix length such that
\begin{equation*}
\sum_{\ell=1}^{q^\star}w_{v_{(\ell)}}^{12}
\ge
\tau.
\end{equation*}
Hence, for every $q<q^\star$,
\begin{equation*}
\sum_{\ell=1}^{q}w_{v_{(\ell)}}^{12}
<
\tau.
\end{equation*}
Since no subset of cardinality $q$ can have a larger weight sum than the first $q$ ranked variables, no subset with cardinality smaller than $q^\star$ can satisfy $B_{12}(T)\le\theta$. The prefix $T^\star=T_{q^\star}(\Rcal)$ satisfies the threshold condition by construction, and therefore has minimum cardinality.
\end{proof}

\subsection{Algorithm}

Algorithm~\ref{alg:calibrated_overlap} implements the calibrated residual-overlap rule. For each class pair, it accumulates the log-separation contributed by the variables as they appear in the ranking. The procedure stops at the first prefix for which every accumulated value reaches the required level $-\log\theta$. We assume throughout that $k \ge 2$ and that all class priors are positive, and use the non-trivial case $\theta\in(0,1)$, so that the empty prefix cannot satisfy the stopping rule. 

If no prefix satisfies all pairwise conditions, the feasibility condition in Proposition~\ref{prop:feasibility} fails. The procedure may then return the full ranked set $T_m(\Rcal)$ together with a ``not calibrated'' status. This status indicates that at least one class pair remains above the prescribed tolerance, that is, $B_{ij}(T_m(\Rcal))>\theta$ for some $i<j$. In implementation, if $\beta_v^{ij}=0$, the corresponding accumulated value $A_{ij}$ is set to $+\infty$, reflecting complete product-model separation for that contrast.

The version with pair-specific thresholds is obtained by replacing the common requirement $A_{ij}\ge -\log\theta$ with $A_{ij}\ge -\log\theta_{ij}$ for each pair $i<j$. 

Although the selected representation is the single class-independent subset $T^\star$, the pairwise residual overlaps computed by the stopping rule provide useful diagnostic information. For each contrast $(c_i,c_j)$, one may record the first prefix length at which that contrast alone reaches its prescribed threshold. The global stopping point is then determined by the slowest calibrating contrast. These quantities identify the pairwise bottlenecks that delay stopping, without changing the fact that the feature subset used for learning remains common to all classes. They are optional diagnostic summaries rather than separate class-specific learning representations.

\begin{algorithm}[t]
\caption{Calibrated Residual-Overlap Feature Selection}
\label{alg:calibrated_overlap}
\KwIn{Ranking $\Rcal=(v_{(1)},\ldots,v_{(m)})$; pairwise overlaps
$\{\beta_v^{ij}\}$; non-trivial uniform threshold $\theta\in(0,1)$}
\KwOut{Selected subset $T^\star$}
$T\leftarrow\varnothing$\;
\For{every pair $1\le i<j\le k$}{
    $A_{ij}\leftarrow0$
    }
\For{$\ell=1$ \KwTo $m$}{
    $v\leftarrow v_{(\ell)}$\;
    $T\leftarrow T\cup\{v\}$\;
    \For{each pair $1\le i<j\le k$}{
        $A_{ij}\leftarrow A_{ij}-\log \beta_v^{ij}$\;
    }
    \If{$A_{ij}\ge -\log\theta$ for all $1\le i<j\le k$}{
        \Return{$T$}\;
    }
}
\Return{$T_m(\Rcal)$ with status ``not calibrated''}
\end{algorithm}

\subsection{Error calibration} \label{subsec:error_calibration}

The residual-overlap threshold $\theta$ can be chosen so as to control a model-based all-pairs risk bound. To make this calibration precise, define the labelled product marginal problem $\mathbb Q_T$ as the joint law of $(X_T,Y)$ with class priors $\pi_1,\ldots,\pi_k$ and class-conditional distributions $P_{1,T}^{\otimes},\ldots,P_{k,T}^{\otimes}$. That is, for every measurable set $A$ in the sample space of $X_T$,
\begin{equation}
\mathbb Q_T(Y=c_i)=\pi_i,
\qquad
\mathbb Q_T(X_T\in A\mid Y=c_i)=P_{i,T}^{\otimes}(A),
\qquad
i=1,\ldots,k.
\end{equation}
Equivalently, under $\mathbb Q_T$, the class prior is $\pi_i$ and the class-conditional law of $X_T$ given $Y\!=\!c_i$ is the product marginal $P_{i,T}^{\otimes}$.

For each pair $(c_i,c_j)$, consider the binary classification problem that distinguishes only these two classes under the product marginal model. Its class-conditional densities are $f_{i,T}^{\otimes}$ and $f_{j,T}^{\otimes}$, and the original class priors $\pi_i$ and $\pi_j$ are used. Let $h_{ij}^{\star,\otimes}$ denote the corresponding binary Bayes rule. Thus $h_{ij}^{\star,\otimes}(x)$ assigns $x$ to either $c_i$ or $c_j$, choosing the class with the larger value of $\pi_i f_{i,T}^{\otimes}(x)$ or $\pi_j f_{j,T}^{\otimes}(x)$.

The multiclass all-pairs classifier $h_{\mathrm{AP}}^{\star,\otimes}$ is then obtained by applying all binary rules $h_{ij}^{\star,\otimes}$, $1\leq i<j\leq k$, and aggregating their decisions through a fixed voting or aggregation rule. Its product-model all-pairs risk is
\begin{equation}
R_{\mathrm{AP}}^{\star,\otimes}(T)
=
\mathbb Q_T
\left\{
h_{\mathrm{AP}}^{\star,\otimes}(X_T)\ne Y
\right\}.
\label{eq:ap_product_risk_def}
\end{equation}
This quantity is the error probability, under the labelled product marginal problem $\mathbb Q_T$, of the ideal all-pairs classifier whose binary components are Bayes rules for the corresponding pairwise product-model problems. It should be distinguished from the Bayes risk of the full multiclass product problem, which would be associated with the direct multiclass Bayes rule.

The following condition ensures that correct pairwise decisions involving the true class are preserved by the multiclass aggregation rule.

\begin{assumption}[All-pairs consistency]\label{ass:all_pairs_consistency}
For every observation $x$ with true class $c_i$, suppose that all binary product-model Bayes rules involving $c_i$ assign $x$ to $c_i$; that is,
\begin{equation*}
h_{ij}^{\star,\otimes}(x)=c_i
\quad \text{for every } j\ne i .
\end{equation*}
Then the all-pairs aggregation rule must also assign $x$ to $c_i$:
\begin{equation*}
h_{\mathrm{AP}}^{\star,\otimes}(x)=c_i .
\end{equation*}
\end{assumption}

This condition is satisfied by standard one-versus-one majority voting. If the true class $c_i$ wins all its $k-1$ pairwise comparisons, then it receives $k-1$ votes. No other class can receive more than $k-2$ votes, since each other class loses its direct comparison against $c_i$. Hence majority voting assigns the final label to $c_i$.

\begin{theorem}[Class-independent all-pairs calibration] \label{thm:multiclass_calibration}
Suppose Assumption~\ref{ass:all_pairs_consistency} holds. If $T\subseteq\Fcal$ satisfies
\begin{equation}
B_{ij}(T)
\le
\theta,
\qquad
1\le i<j\le k,
\end{equation}
for a uniform residual-overlap threshold $\theta\in(0,1]$, then
\begin{equation}
R_{\mathrm{AP}}^{\star,\otimes}(T)
\le
\theta
\sum_{1\le i<j\le k}\sqrt{\pi_i\pi_j}.
\label{eq:multiclass_error_bound}
\end{equation}
Consequently,
\begin{equation}
R_{\mathrm{AP}}^{\star,\otimes}(T)
\le
\frac{k-1}{2}\theta.
\label{eq:multiclass_prior_free_bound}
\end{equation}
\end{theorem}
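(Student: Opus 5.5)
The plan is to reduce the multiclass all-pairs error to a sum of pairwise Bayes-error contributions, apply Lemma~\ref{lem:pairwise_bound} to each, and then bound the resulting prior sum.

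\textbf{Step 1: reduce the error event to pairwise errors.} Under $\mathbb Q_T$, the contrapositive of Assumption~\ref{ass:all_pairs_consistency} says that if $h_{\mathrm{AP}}^{\star,\otimes}(x)\ne c_i$ for an observation of true class $c_i$, then some $j\ne i$ has $h_{ij}^{\star,\otimes}(x)=c_j$. Hence, with $E_{ij}=\{x: h_{ij}^{\star,\otimes}(x)=c_j\}$ (written for either ordering of the pair), the union bound gives
\[
R_{\mathrm{AP}}^{\star,\otimes}(T)=\sum_{i}\pi_i P_{i,T}^{\otimes}\{h_{\mathrm{AP}}^{\star,\otimes}\ne c_i\}\le \sum_i\sum_{j\ne i}\pi_i P_{i,T}^{\otimes}(E_{ij}).
\]

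\textbf{Step 2: regroup into pairwise Bayes contributions.} I would regroup the double sum by unordered pairs $\{i,j\}$. Each pair contributes $\pi_i P_{i,T}^{\otimes}(E_{ij})+\pi_j P_{j,T}^{\otimes}(E_{ji})$. The binary Bayes rule $h_{ij}^{\star,\otimes}$ errs on class $c_i$ exactly where it chooses $c_j$, i.e.\ where $\pi_j f_{j,T}^{\otimes}\ge \pi_i f_{i,T}^{\otimes}$ (ties broken consistently), and symmetrically for $c_j$. So this pair contribution equals
\[
\int \min\{\pi_i f_{i,T}^{\otimes},\pi_j f_{j,T}^{\otimes}\}\,d\mu_T=R_{ij}^{\star,\otimes}(T).
\]
I must check carefully that $E_{ij}$ and $E_{ji}$ come from the same binary rule and that tie-breaking is consistent; this bookkeeping is the main (mild) obstacle.

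\textbf{Step 3: apply the pairwise bound.} Lemma~\ref{lem:pairwise_bound} together with $B_{ij}(T)\le\theta$ gives $R_{ij}^{\star,\otimes}(T)\le\sqrt{\pi_i\pi_j}\,\theta$. Summing over $i<j$ yields \eqref{eq:multiclass_error_bound}.

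\textbf{Step 4: the prior-free bound.} For \eqref{eq:multiclass_prior_free_bound} I need $\sum_{i<j}\sqrt{\pi_i\pi_j}\le (k-1)/2$. I would write
\[
\sum_{i<j}\sqrt{\pi_i\pi_j}=\tfrac12\Big[\Big(\sum_i\sqrt{\pi_i}\Big)^2-\sum_i\pi_i\Big].
\]
By Cauchy--Schwarz, $\big(\sum_i\sqrt{\pi_i}\big)^2\le k\sum_i\pi_i=k$, and $\sum_i\pi_i=1$. This gives $\tfrac12(k-1)$, with equality at uniform priors.
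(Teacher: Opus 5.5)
Your proposal is correct and follows essentially the same route as the paper. The paper's pairwise event $\Ecal_{ij}$ is exactly the disjoint union of your two ordered-pair pieces $\{Y=c_i\}\cap E_{ij}$ and $\{Y=c_j\}\cap E_{ji}$, after which it applies the union bound, identifies $\mathbb Q_T(\Ecal_{ij})$ with $R_{ij}^{\star,\otimes}(T)$, invokes Lemma~\ref{lem:pairwise_bound}, and uses the same Cauchy--Schwarz identity for the prior-free constant; the tie-breaking bookkeeping you flag is harmless, since on ties $\pi_i f_{i,T}^{\otimes}=\pi_j f_{j,T}^{\otimes}$ equals the minimum whichever class the rule picks.
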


\begin{proof}
For each pair $i<j$, define the pairwise error event
\begin{equation*}
\Ecal_{ij}
=
\left\{
Y=c_i,\ h_{ij}^{\star,\otimes}(X_T)\ne c_i
\right\}
\cup
\left\{
Y=c_j,\ h_{ij}^{\star,\otimes}(X_T)\ne c_j
\right\}.
\end{equation*}
By Assumption~\ref{ass:all_pairs_consistency}, an all-pairs classification error can occur only if at least one pairwise comparison involving the true class is incorrect. Hence
\begin{equation*}
\left\{
h_{\mathrm{AP}}^{\star,\otimes}(X_T)\ne Y
\right\}
\subseteq
\bigcup_{1\le i<j\le k}\Ecal_{ij}.
\end{equation*}
The union bound gives
\begin{equation*}
R_{\mathrm{AP}}^{\star,\otimes}(T)
\le
\sum_{1\le i<j\le k}\mathbb Q_T(\Ecal_{ij}).
\end{equation*}
For the binary Bayes rule associated with classes $c_i$ and $c_j$,
\begin{equation*}
\mathbb Q_T(\Ecal_{ij})
=
\int
\min
\left\{
\pi_i f_{i,T}^{\otimes}(x),
\pi_j f_{j,T}^{\otimes}(x)
\right\}
\,d\mu_T(x).
\end{equation*}
By the pairwise Bhattacharyya bound in Lemma~\ref{lem:pairwise_bound},
\begin{equation*}
\mathbb Q_T(\Ecal_{ij})
\le
\sqrt{\pi_i\pi_j}\,B_{ij}(T).
\end{equation*}
Therefore,
\begin{equation*}
R_{\mathrm{AP}}^{\star,\otimes}(T)
\le
\sum_{1\le i<j\le k}
\sqrt{\pi_i\pi_j}\,B_{ij}(T)
\le
\theta
\sum_{1\le i<j\le k}\sqrt{\pi_i\pi_j}.
\end{equation*}
Finally,
\begin{equation*}
\left(
\sum_{i=1}^k\sqrt{\pi_i}
\right)^2
=
\sum_{i=1}^k\pi_i
+
2\sum_{1\le i<j\le k}\sqrt{\pi_i\pi_j}
=
1+
2\sum_{1\le i<j\le k}\sqrt{\pi_i\pi_j}.
\end{equation*}
By Cauchy's inequality,
\begin{equation*}
\left(
\sum_{i=1}^k\sqrt{\pi_i}
\right)^2
\le
k\sum_{i=1}^k\pi_i
=
k.
\end{equation*}
Hence
\begin{equation*}
\sum_{1\le i<j\le k}\sqrt{\pi_i\pi_j}
\le
\frac{k-1}{2},
\end{equation*}
which proves \eqref{eq:multiclass_prior_free_bound}.
\end{proof}

\begin{corollary}[Prior-dependent threshold calibration]\label{cor:prior_dependent_calibration}
Let $\varepsilon>0$ be a target product-model all-pairs risk level. If the uniform residual-overlap threshold $\theta\in(0,1]$ satisfies
\begin{equation}
\theta
\le
\frac{\varepsilon}
{\sum_{1\le i<j\le k}\sqrt{\pi_i\pi_j}},
\label{eq:cor_prior_dependent}
\end{equation}
then every subset satisfying
\begin{equation}
B_{ij}(T)\le\theta,
\qquad
1\le i<j\le k,
\end{equation}
also satisfies
\begin{equation}
R_{\mathrm{AP}}^{\star,\otimes}(T)\le \varepsilon.
\end{equation}
\end{corollary}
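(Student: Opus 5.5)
The plan is to obtain the corollary directly from the prior-dependent bound \eqref{eq:multiclass_error_bound} in Theorem~\ref{thm:multiclass_calibration}. We work under the same standing hypothesis as that theorem, namely Assumption~\ref{ass:all_pairs_consistency}, which the corollary implicitly inherits since $R_{\mathrm{AP}}^{\star,\otimes}$ is defined through the same aggregation rule. Write $S_\pi=\sum_{1\le i<j\le k}\sqrt{\pi_i\pi_j}$.

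First I would check that $S_\pi>0$, so that the right-hand side of \eqref{eq:cor_prior_dependent} is a well-defined positive number. This follows from the standing assumptions of the method section: $k\ge 2$ and all priors are positive. Hence the admissible range of $\theta$ is nonempty, for instance $\theta=\min\{1,\varepsilon/S_\pi\}$.

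Next, fix any $T$ with $B_{ij}(T)\le\theta$ for all $i<j$. Since $\theta\in(0,1]$, Theorem~\ref{thm:multiclass_calibration} applies and gives $R_{\mathrm{AP}}^{\star,\otimes}(T)\le\theta\,S_\pi$. Multiplying the hypothesis \eqref{eq:cor_prior_dependent} by $S_\pi>0$ gives $\theta S_\pi\le\varepsilon$, and chaining the two inequalities yields the claim.

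There is no real obstacle here. The only points needing care are the positivity of $S_\pi$, which is needed to divide and to preserve the direction of the inequality, and making explicit that the all-pairs consistency assumption is carried over from the theorem. It is also worth remarking that when $\varepsilon\ge S_\pi$ the constraint on $\theta$ is vacuous within $(0,1]$. This is consistent with the trivial bound $R_{\mathrm{AP}}^{\star,\otimes}(T)\le S_\pi$ obtained at $\theta=1$.
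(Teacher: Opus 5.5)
Your proposal is correct and follows the paper's proof, which simply invokes the bound $R_{\mathrm{AP}}^{\star,\otimes}(T)\le\theta\sum_{1\le i<j\le k}\sqrt{\pi_i\pi_j}$ from Theorem~\ref{thm:multiclass_calibration} and combines it with the hypothesis on $\theta$. Your remarks on the positivity of $S_\pi$ and on inheriting Assumption~\ref{ass:all_pairs_consistency} make explicit what the paper leaves implicit.
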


\begin{proof}
The result follows directly from Theorem~\ref{thm:multiclass_calibration}.
\end{proof}

The same argument also permits pair-specific thresholds. If
\begin{equation*}
B_{ij}(T)\le \theta_{ij},
\qquad 1\le i<j\le k,
\end{equation*}
then
\begin{equation*}
R_{\operatorname{AP}}^{\star,\otimes}(T)
\le
\sum_{1\le i<j\le k}\sqrt{\pi_i\pi_j}\,\theta_{ij}.
\end{equation*}
Thus pair-specific tolerances are admissible whenever
\begin{equation*}
\sum_{1\le i<j\le k}\sqrt{\pi_i\pi_j}\,\theta_{ij}
\le
\varepsilon.
\end{equation*}

\begin{corollary}[Prior-free threshold calibration] \label{cor:prior_free_calibration}
Let $\varepsilon>0$ be a target product-model all-pairs risk level. If the uniform residual-overlap threshold $\theta\in(0,1]$ satisfies
\begin{equation}
\theta
\le
\frac{2\varepsilon}{k-1},
\label{eq:cor_prior_free}
\end{equation}
then every subset satisfying
\begin{equation}
B_{ij}(T)\le\theta,
\qquad
1\le i<j\le k,
\end{equation}
also satisfies
\begin{equation}
R_{\mathrm{AP}}^{\star,\otimes}(T)\le \varepsilon
\end{equation}
for every class-prior vector $(\pi_1,\ldots,\pi_k)$.
\end{corollary}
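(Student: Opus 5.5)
The plan is to reduce the statement to the prior-free bound \eqref{eq:multiclass_prior_free_bound} of Theorem~\ref{thm:multiclass_calibration}, which is already available. Fix an arbitrary class-prior vector $(\pi_1,\ldots,\pi_k)$ with positive entries summing to one, and let $T$ satisfy $B_{ij}(T)\le\theta$ for all $1\le i<j\le k$. Since $\theta\in(0,1]$ is uniform, Theorem~\ref{thm:multiclass_calibration} applies and gives
\begin{equation*}
R_{\mathrm{AP}}^{\star,\otimes}(T)\le \frac{k-1}{2}\,\theta .
\end{equation*}
This holds under Assumption~\ref{ass:all_pairs_consistency}, which I would state as a standing hypothesis, as the corollary implicitly inherits it from the theorem. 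Substituting the hypothesis $\theta\le 2\varepsilon/(k-1)$, which is meaningful because we assume $k\ge2$, yields
\begin{equation*}
R_{\mathrm{AP}}^{\star,\otimes}(T)\le \frac{k-1}{2}\cdot\frac{2\varepsilon}{k-1}=\varepsilon .
\end{equation*}

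The second step is to justify the phrase ``for every class-prior vector''. Neither the hypothesis on $\theta$ nor the condition $B_{ij}(T)\le\theta$ involves the priors. The quantities $\beta_v^{ij}$ and $B_{ij}(T)$ depend only on the class-conditional marginals $P_{i,v}$, and Lemma~\ref{lem:factorisation} shows that $B_{ij}(T)$ is determined by them alone. Changing $\pi$ changes $\mathbb Q_T$ and the pairwise Bayes rules $h_{ij}^{\star,\otimes}$, but it does not change the hypothesis. The bound $\sum_{i<j}\sqrt{\pi_i\pi_j}\le (k-1)/2$, established in the proof of the theorem via Cauchy's inequality, holds uniformly over the simplex. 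So the argument above goes through verbatim for each $\pi$.

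There is no real obstacle here. The only points of care are these:
\begin{itemize}
\item noting that the argument is pointwise in $\pi$, so no supremum needs to be controlled;
\item noting that Assumption~\ref{ass:all_pairs_consistency} must hold for the aggregation rule under each prior, which is automatic for one-versus-one majority voting as argued after the assumption;
\item recording that the same conclusion can alternatively be read off from Corollary~\ref{cor:prior_dependent_calibration}, since $2\varepsilon/(k-1)\le \varepsilon/\sum_{i<j}\sqrt{\pi_i\pi_j}$ for every $\pi$.
\end{itemize}
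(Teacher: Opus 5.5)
Your proposal is correct and matches the paper's proof, which simply invokes the prior-free bound $R_{\mathrm{AP}}^{\star,\otimes}(T)\le \frac{k-1}{2}\theta$ from Theorem~\ref{thm:multiclass_calibration} and substitutes $\theta\le 2\varepsilon/(k-1)$. Your remarks on uniformity in $\pi$ and on the inherited Assumption~\ref{ass:all_pairs_consistency} are accurate clarifications of what the paper leaves implicit.
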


\begin{proof}
The result follows from the prior-free bound \eqref{eq:multiclass_prior_free_bound}.
\end{proof}

\subsection{Choosing the residual-overlap threshold} \label{subsec:choosing_threshold}

The stopping rule is controlled by the residual-overlap threshold $\theta$. This parameter acts on the overlap scale: it is the numerical level imposed on each pairwise product overlap $B_{ij}(T)$. It should be distinguished from the risk-level parameter $\varepsilon$, which denotes a target upper bound on the product-model all-pairs risk $R_{\mathrm{AP}}^{\star,\otimes}(T)$. Thus $\theta$ is the operational threshold used by the stopping rule, whereas $\varepsilon$ is specified on the product-model risk scale.

Let
\begin{equation*}
S_\pi
=
\sum_{1\le i<j\le k}\sqrt{\pi_i\pi_j}.
\label{eq:S_pi}
\end{equation*}
By Theorem~\ref{thm:multiclass_calibration}, if a subset $T$ satisfies
\begin{equation*}
B_{ij}(T)\le \theta,
\qquad
1\le i<j\le k,
\end{equation*}
then
\begin{equation*}
R_{\mathrm{AP}}^{\star,\otimes}(T)
\le
\theta S_\pi.
\label{eq:risk_threshold_relation}
\end{equation*}
This identity gives two equivalent ways to use the calibration.

First, one may fix the residual-overlap threshold $\theta$ and ask which product-model risk level is guaranteed. In that case,
\begin{equation}
\varepsilon(\theta)
=
\theta S_\pi.
\label{eq:theta_to_risk}
\end{equation}
Therefore every subset satisfying $B_{ij}(T)\le\theta$ for all pairs also satisfies
\begin{equation}
R_{\mathrm{AP}}^{\star,\otimes}(T)
\le
\varepsilon(\theta).
\end{equation}

Conversely, one may specify the desired target all-pairs risk level $\varepsilon>0$ first and derive a sufficient residual-overlap threshold. If the class priors $\pi_1,\ldots,\pi_k$ are known or reliably estimated, the prior-dependent choice is
\begin{equation}
\theta_\varepsilon
=
\min
\left\{
1,
\frac{\varepsilon}{S_\pi}
\right\}.
\label{eq:risk_to_theta}
\end{equation}
With this choice, any subset satisfying
\begin{equation}
B_{ij}(T)\le\theta_\varepsilon,
\qquad
1\le i<j\le k,
\end{equation}
also satisfies the target product-model all-pairs risk bound
\begin{equation}
R_{\mathrm{AP}}^{\star,\otimes}(T)
\le
\varepsilon.
\end{equation}
Thus \eqref{eq:theta_to_risk} answers what risk level is guaranteed by a chosen threshold, whereas \eqref{eq:risk_to_theta} answers which threshold is sufficient to guarantee a chosen risk level.

If a prior-free calibration is desired, Corollary~\ref{cor:prior_free_calibration} gives the sufficient condition
\begin{equation}
\theta
\le
\frac{2\varepsilon}{k-1}.
\end{equation}
Equivalently, a prior-free operational choice is
\begin{equation}
\theta_\varepsilon^{\mathrm{pf}}
=
\min
\left\{
1,
\frac{2\varepsilon}{k-1}
\right\}.
\label{eq:theta_prior_free_trunc}
\end{equation}
This choice guarantees the target all-pairs risk level for every possible class-prior vector. It is generally more conservative than the prior-dependent calibration when reliable prior information is available.

In the pair-specific case, the target level $\varepsilon$ may be allocated unevenly across class contrasts by choosing thresholds $\theta_{ij}$ satisfying
\begin{equation*}
\sum_{1\le i<j\le k}\sqrt{\pi_i\pi_j}\,\theta_{ij}
\le
\varepsilon.
\end{equation*}
The uniform rule corresponds to the special case $\theta_{ij}=\theta$ for all pairs. Pair-specific thresholds may be useful when some contrasts have greater scientific, clinical or operational importance than others.

\subsection{Estimating marginal overlaps}

In practice, the marginal Bhattacharyya coefficients $\beta_v^{ij}$ are unknown and must be estimated from the training data. Let $\widehat P_{i,v}$ be an estimator of the class-conditional marginal law $P_{i,v}$, with density $\widehat f_{i,v}$ with respect to the same dominating measure $\mu_v$. The plug-in overlap coefficient is defined by
\begin{equation}\label{eq:plugin_beta}
\widehat\beta_v^{ij}
=
BC(\widehat P_{i,v},\widehat P_{j,v})
=
\int
\sqrt{\widehat f_{i,v}(x)\widehat f_{j,v}(x)}
\,d\mu_v(x),
\qquad
\widehat w_v^{ij}
=
-\log \widehat\beta_v^{ij}.
\end{equation}
For a prefix $T_q(\Rcal)=\{v(1),\ldots,v(q)\}$, the corresponding plug-in residual overlap is
\begin{equation}
\widehat B_{ij}(T_q(\Rcal))
=
\prod_{\ell=1}^{q}
\widehat\beta_{v(\ell)}^{ij}.
\end{equation}

The estimators $\widehat P_{i,v}$ may be empirical mass functions for discrete variables, parametric marginal models, kernel density estimators, or other univariate density estimators for continuous variables. In particular, kernel density estimation provides a flexible non-parametric way to estimate class-conditional marginal densities before computing the plug-in Bhattacharyya coefficient in \eqref{eq:plugin_beta}. Related KDE-based class-conditional density estimates have recently been used in explainable Naive Bayes classification for high-dimensional genomic data \cite{AguilarRuiz2024XNB}. The stopping rule used in data analysis is obtained by replacing the population overlaps by these plug-in estimates. Its empirical reliability therefore depends on the quality of the marginal overlap estimates, especially in small samples, rare classes or continuous settings. Conservative variants may use regularised density estimates, upper confidence bounds for $\beta_v^{ij}$, sample splitting or resampling diagnostics to reduce overly optimistic stopping decisions.

\subsection{Dependence and the model-based calibration gap} \label{subsec:dependence_calibration_gap}

The calibration developed above is model-based. It is valid for the labelled product marginal problem, but it is not, by itself, a solution to the general problem of dependence among variables. Let
\begin{equation*}
P_{i,T}=\mathcal L(X_T\mid Y=c_i)
\end{equation*}
denote the true class-conditional joint distribution on the selected variables, and let
\begin{equation*}
Q_{i,T}=P_{i,T}^{\otimes}=\bigotimes_{v\in T}P_{i,v}
\end{equation*}
be the corresponding product marginal model. The stopping rule calibrates the product-model residual overlap
\begin{equation*}
B_{ij}^{\otimes}(T)
=
\operatorname{BC}(Q_{i,T},Q_{j,T})
=
\prod_{v\in T}\beta_v^{ij}.
\end{equation*}
In general,
\begin{equation*}
\operatorname{BC}(P_{i,T},P_{j,T})
\ne
B_{ij}^{\otimes}(T).
\end{equation*}

Equality is guaranteed if the true joint class-conditional laws coincide with their product marginal versions for the two classes under consideration, for example under conditional independence of the selected variables within each class. Otherwise, the product construction should be interpreted as an explicit working model for marginal evidence accumulation, not as a claim that the true joint distributions factorise.

This distinction is important. If selected variables are redundant, the product model may count largely duplicated marginal evidence more than once, thereby underestimating the true joint overlap. Conversely, if discrimination is carried primarily by dependence structure rather than by marginal distributions, the product model may fail to detect useful joint information. Thus the proposed residual-overlap threshold calibrates accumulated marginal evidence under an explicit product construction.

The risk bounds derived above should therefore be read as product-model calibrations. They control an ideal all-pairs risk under the labelled product marginal problem, not the risk of an arbitrary classifier under the true joint distribution. For the latter, an additional model gap must be considered, together with ordinary estimation and learning error. Accordingly, the residual-overlap threshold should be interpreted as a model-based calibration of accumulated marginal evidence, not as a distribution-free guarantee for arbitrary dependent feature vectors.

\subsection{Advantages, scope and limitations} \label{subsec:advantages_scope_limitations}

The proposed method provides a calibrated stopping principle for supervised feature rankings. Its main contribution is to replace a fixed top-$q$ choice by a residual-overlap threshold with an explicit interpretation under the labelled product marginal model. The selected prefix is the first point along the ranking at which all pairwise class contrasts have accumulated enough marginal separation to satisfy the prescribed overlap condition. Hence the retained cardinality is determined by a distributional criterion rather than by an externally imposed subset size.

The procedure is modular with respect to the input ranking. It may be applied to rankings obtained from domain knowledge, mutual information, a $\chi^2$ score, a statistical test, model-based importance measures, distributional criteria or other supervised relevance scores. The purpose of the method is not to define a new ranking score, but to attach a calibrated truncation rule to an existing ordered list and thereby convert it into a class-independent feature subset. Although the stopping condition is based on pairwise class-conditional overlaps, the output is a single subset $T^\star\subseteq\mathcal F$ shared by all classes. Pairwise quantities may still be used diagnostically to identify the contrasts that delay stopping, but they do not produce separate class-dependent feature spaces.

The rule is interpretable and computationally simple. The weights $w_v^{ij}=-\log\beta_v^{ij}$ give an additive account of the marginal separation contributed by variable $v$ to contrast $(c_i,c_j)$. Thus the stopping condition can be read either multiplicatively, in terms of residual overlaps, or additively, in terms of accumulated log-separation. Once the one-dimensional class-conditional overlaps have been estimated, evaluating the stopping rule along a ranking has cost proportional to the number of ranked variables and to the number of class pairs.

These advantages hold within the scope of the product marginal model. The Bhattacharyya factorisation is exact for the product laws $P_{i,T}^{\otimes}$, but not necessarily for the true class-conditional joint laws $P_{i,T}$. Consequently, the calibration controls a model-based all-pairs risk quantity, rather than the risk of an arbitrary classifier under the true joint distribution. Strong dependence among selected variables may create a calibration gap. Redundant variables can make the product overlap decrease too quickly, whereas discriminatory dependence structures with weak marginal signals may be missed.

A related practical issue is redundancy in the input ranking. Highly ranked variables may reflect the same discriminatory signal, so marginal relevance does not necessarily imply non-redundant joint usefulness. Redundancy-aware filters, including correlation-based feature selection, fast correlation-based filtering, minimum-redundancy maximum-relevance selection, Markov-blanket approximations and conditional-information criteria, were developed to address this distinction \cite{Hall1999,KollerSahami1996,YuLiu2003,YuLiu2004,Peng2005,Brown2012}. When redundancy is a concern, the ranking may therefore be refined before applying the residual-overlap rule, for example by removing, merging or deprioritising variables that are strongly redundant with earlier ones. This can improve interpretability, reduce measurement or storage costs, increase stability and lessen repeated counting of the same marginal evidence, particularly in high-dimensional biological data with co-expressed genes or probes \cite{Guyon2003,Saeys2007}.

The empirical reliability of the rule also depends on the estimation of the marginal overlaps. In small samples, especially with continuous variables or rare classes, density and overlap estimates may be unstable. Regularised density estimation, conservative upper confidence bounds for $\beta_v^{ij}$, sample splitting or resampling diagnostics can help reduce overly optimistic stopping decisions.

These limitations define the scope of the method. The proposed rule is best viewed as a reproducible, product-model-calibrated stopping principle for supervised feature rankings.

\section{Comparative analysis}\label{sec:comparative_analysis}

We evaluated the proposed residual-overlap stopping rule on 18 high-dimensional genomic datasets from CuMiDa \cite{Feltes2019}. The datasets contain between 22,277 and 54,675 variables.

The target product-model all-pairs risk level was fixed at $\varepsilon\!=\!0.001$, and the uniform residual-overlap calibration was used throughout the comparison. Thus, for each dataset the operational residual-overlap threshold was computed according to Eq.~(\ref{eq:risk_to_theta}). Since this threshold depends on the empirical class-prior distribution through the quantity $\sum_{i<j}\sqrt{\pi_i\pi_j}$, the resulting value of $\theta$ varies across datasets. Binary and approximately balanced datasets yield thresholds close to $0.002$. Multiclass datasets, especially those with several classes carrying substantial prior mass, have larger values of $S_\pi$ and therefore require smaller residual-overlap thresholds for the same target level $\varepsilon$.

All reported results were obtained using stratified 10-fold cross-validation. The marginal Bhattacharyya overlaps, the feature ranking and the stopping point were also estimated exclusively from the training data. For each training fold, the class-conditional marginal distribution of each variable was estimated separately within each class using Gaussian kernel density estimation. For every variable--class pair, the density was evaluated on a fixed grid of 50 support points spanning the observed training range of that variable, and the bandwidth was selected using Silverman's rule of thumb. The resulting univariate density estimates were then used to compute the plug-in Bhattacharyya coefficients for each pairwise class contrast. All these operations were performed within the training fold only, before determining the
feature ranking and the calibrated stopping point.

Two external ranking criteria are reported: mutual information (MI) and the chi-square score ($\chi^2$). The column ``All'' denotes the all-features baseline, where the classifier is trained using all available variables. Although All Features is not a ranking method, it provides the natural reference for assessing whether the proposed stopping rule preserves predictive performance after dimensionality reduction. Two downstream classifiers were considered: Gaussian naive Bayes (GNB), which is naturally aligned with the product-marginal interpretation of the framework, and logistic regression (LR), which provides a discriminative linear baseline. Although this comparison reports accuracy and macro-F1 for compactness, the selected subsets could also be evaluated using richer multiclass performance summaries, particularly probability-based assessments that can highlight class-dependent performance patterns beyond scalar metrics \cite{AguilarRuizMichalak2022MCP}.

Table~\ref{tab:num_features_eps0001_uniform} summarizes the number of selected variables. The dimensionality reduction is substantial. The all-features baseline uses, on average, 42,109 variables. In contrast, the residual-overlap stopping rule selects only 42.9 variables on average when applied to the MI ranking and 56.0 variables when applied to the $\chi^2$ ranking. These values correspond to approximately $0.10\%$ and $0.13\%$ of the original feature space, respectively. Therefore, the proposed stopping rule reduces the dimensionality by roughly three orders of magnitude while retaining an explicit calibration criterion linked to the product-model all-pairs risk level.

The selected dimensionality is also adaptive across datasets. Some datasets, such as Breast\_GSE42568, Lung\_GSE7670, Pancreatic\_GSE16515, Prostate\_GSE46602 and Throat\_GSE59102, require only a few variables to reach the prescribed residual-overlap threshold. In contrast, more heterogeneous or multiclass datasets, such as Breast\_GSE26304 and Leukemia\_GSE28497, require longer prefixes. This behaviour is consistent with the theoretical role of the stopping rule: the selected prefix length is not fixed a priori, but is determined by the amount of accumulated marginal evidence needed to satisfy the pairwise overlap constraints.

Table~\ref{tab:performance_eps0001_uniform} reports the predictive performance obtained with the selected subsets and with the all-features baseline. For GNB, the all-features baseline achieves a mean accuracy of $0.809$ and a mean macro-F1 of $0.752$. The MI-based residual-overlap selection obtains a higher mean accuracy, $0.825$, and a higher mean macro-F1, $0.795$, while using only 42.9 variables on average. The $\chi^2$-based selection obtains a mean accuracy of $0.805$ and a mean macro-F1 of $0.779$, also with a very small selected subset.

\begin{table}[t]
\centering
\scriptsize
\setlength{\tabcolsep}{8pt}
\fontsize{8}{10}\selectfont
\caption{Number of available features, selected prefix lengths and prior-dependent uniform residual-overlap thresholds for $\varepsilon=0.001$.}
\label{tab:num_features_eps0001_uniform}
\begin{tabular}{lrrrr}
\toprule
\multirow{2}{*}{Dataset} & \multicolumn{3}{c}{Num. Features ($q^\star$)} & \\
\cmidrule(lr){2-4}
& \multicolumn{1}{c}{All} & \multicolumn{1}{c}{MI} & \multicolumn{1}{c}{$\chi^2$} & \multicolumn{1}{c}{$\theta$} \\
\midrule
Bladder\_GSE31189 & 54,675 & 122.0 & 73.9 & 0.002017 \\
Brain\_GSE15824 & 54,675 & 18.6 & 34.0 & 0.000676 \\
Brain\_GSE50161 & 54,675 & 31.1 & 42.3 & 0.000537 \\
Breast\_GSE10797 & 22,277 & 30.4 & 76.0 & 0.001074 \\
Breast\_GSE26304 & 33,637 & 148.7 & 329.3 & 0.000754 \\
Breast\_GSE42568 & 54,675 & 4.1 & 2.8 & 0.002980 \\
Breast\_GSE45827 & 54,675 & 39.0 & 54.7 & 0.000433 \\
Breast\_GSE7904 & 54,675 & 7.0 & 6.7 & 0.001073 \\
Colorectal\_GSE21510 & 54,675 & 8.1 & 19.2 & 0.001273 \\
Colorectal\_GSE77953 & 22,283 & 46.4 & 90.9 & 0.000676 \\
Leukemia\_GSE28497 & 22,283 & 199.7 & 167.4 & 0.000361 \\
Lung\_GSE7670 & 22,283 & 5.4 & 4.3 & 0.002003 \\
Ovary\_GSE6008 & 22,283 & 49.1 & 41.9 & 0.000770 \\
Pancreatic\_GSE16515 & 54,675 & 8.4 & 8.1 & 0.002195 \\
Prostate\_GSE11682 & 33,467 & 29.5 & 28.5 & 0.002001 \\
Prostate\_GSE46602 & 54,675 & 6.2 & 5.1 & 0.002214 \\
Renal\_GSE53757 & 54,675 & 13.1 & 14.5 & 0.002000 \\
Throat\_GSE59102 & 32,703 & 6.2 & 7.9 & 0.002163 \\
\midrule
\textbf{Mean} & \textbf{42,109} & \textbf{42.9} & \textbf{56.0} & \textbf{0.001400} \\
\bottomrule
\end{tabular}
\vspace{1mm}
\end{table}

\begin{table*}[t]
\centering
\scriptsize
\setlength{\tabcolsep}{6pt}
\fontsize{8}{10}\selectfont
\caption{Predictive performance with $\varepsilon=0.001$ and uniform residual-overlap calibration.}
\label{tab:performance_eps0001_uniform}
\begin{tabular}{lrrrrrrrrrrrr}
\toprule
\multirow{3}{*}{Dataset}
& \multicolumn{6}{c}{GNB}
& \multicolumn{6}{c}{LR} \\
\cmidrule(lr){2-7}\cmidrule(lr){8-13}
& \multicolumn{3}{c}{Accuracy}
& \multicolumn{3}{c}{Macro-F1}
& \multicolumn{3}{c}{Accuracy}
& \multicolumn{3}{c}{Macro-F1} \\
\cmidrule(lr){2-4}\cmidrule(lr){5-7}
\cmidrule(lr){8-10}\cmidrule(lr){11-13}
& All & MI & $\chi^2$
& All & MI & $\chi^2$
& All & MI & $\chi^2$
& All & MI & $\chi^2$ \\
\midrule
Bladder\_GSE31189 & 0.567 & 0.567 & 0.532 & 0.530 & 0.551 & 0.515 & 0.692 & 0.551 & 0.681 & 0.674 & 0.524 & 0.659 \\
Brain\_GSE15824 & 0.724 & 0.700 & 0.724 & 0.608 & 0.580 & 0.635 & 0.805 & 0.757 & 0.671 & 0.767 & 0.682 & 0.612 \\
Brain\_GSE50161 & 0.908 & 0.908 & 0.831 & 0.864 & 0.863 & 0.799 & 0.938 & 0.931 & 0.854 & 0.937 & 0.932 & 0.815 \\
Breast\_GSE10797 & 0.712 & 0.719 & 0.726 & 0.567 & 0.634 & 0.716 & 0.802 & 0.748 & 0.807 & 0.786 & 0.715 & 0.791 \\
Breast\_GSE26304 & 0.392 & 0.443 & 0.461 & 0.329 & 0.333 & 0.478 & 0.305 & 0.410 & 0.418 & 0.295 & 0.468 & 0.406 \\
Breast\_GSE42568 & 0.992 & 0.983 & 0.973 & 0.948 & 0.929 & 0.909 & 0.965 & 0.965 & 0.957 & 0.930 & 0.907 & 0.884 \\
Breast\_GSE45827 & 0.927 & 0.888 & 0.868 & 0.919 & 0.891 & 0.881 & 0.934 & 0.934 & 0.914 & 0.935 & 0.935 & 0.928 \\
Breast\_GSE7904 & 0.935 & 0.912 & 0.956 & 0.872 & 0.888 & 0.952 & 0.935 & 0.956 & 0.956 & 0.924 & 0.943 & 0.952 \\
Colorectal\_GSE21510 & 0.993 & 0.966 & 0.891 & 0.987 & 0.931 & 0.796 & 0.946 & 0.959 & 0.911 & 0.929 & 0.908 & 0.822 \\
Colorectal\_GSE77953 & 0.830 & 0.850 & 0.743 & 0.806 & 0.848 & 0.695 & 0.980 & 0.980 & 0.780 & 0.967 & 0.970 & 0.723 \\
Leukemia\_GSE28497 & 0.833 & 0.822 & 0.843 & 0.781 & 0.800 & 0.829 & 0.876 & 0.850 & 0.875 & 0.864 & 0.822 & 0.856 \\
Lung\_GSE7670 & 0.900 & 0.960 & 0.960 & 0.898 & 0.960 & 0.960 & 0.960 & 0.940 & 0.980 & 0.956 & 0.936 & 0.980 \\
Ovary\_GSE6008 & 0.753 & 0.713 & 0.764 & 0.685 & 0.731 & 0.800 & 0.712 & 0.692 & 0.784 & 0.696 & 0.737 & 0.793 \\
Pancreatic\_GSE16515 & 0.847 & 0.927 & 0.903 & 0.760 & 0.919 & 0.860 & 0.827 & 0.847 & 0.903 & 0.784 & 0.834 & 0.894 \\
Prostate\_GSE11682 & 0.542 & 0.667 & 0.633 & 0.425 & 0.623 & 0.572 & 0.700 & 0.767 & 0.800 & 0.648 & 0.732 & 0.738 \\
Prostate\_GSE46602 & 0.915 & 0.980 & 0.960 & 0.840 & 0.976 & 0.956 & 0.920 & 0.960 & 0.920 & 0.877 & 0.956 & 0.912 \\
Renal\_GSE53757 & 0.839 & 0.853 & 0.839 & 0.838 & 0.852 & 0.837 & 0.852 & 0.866 & 0.852 & 0.851 & 0.865 & 0.851 \\
Throat\_GSE59102 & 0.950 & 1.000 & 0.880 & 0.886 & 1.000 & 0.843 & 1.000 & 0.950 & 0.885 & 1.000 & 0.947 & 0.846 \\
\midrule
\textbf{Mean} & \textbf{0.809} & \textbf{0.825} & \textbf{0.805} & \textbf{0.752} & \textbf{0.795} & \textbf{0.780} & \textbf{0.842} & \textbf{0.837} & \textbf{0.830} & \textbf{0.823} & \textbf{0.823} & \textbf{0.803} \\
\bottomrule
\end{tabular}
\vspace{1mm}
\end{table*}

For LR, the all-features baseline gives the highest mean accuracy, $0.842$, but the MI-based selected subsets obtain a very similar value, $0.837$, and the same mean macro-F1, $0.823$. The $\chi^2$ ranking yields mean accuracy $0.830$ and mean macro-F1 $0.803$. Thus, at the descriptive level, the reduced subsets retain almost all the predictive performance of the full feature space, and in some cases even obtain higher average scores. However, these mean differences should not be interpreted as evidence of systematic superiority without formal statistical testing.

To assess whether the observed differences were statistically significant, we used non-parametric paired tests across datasets. For each classifier and each metric, we first applied the Friedman test to the three alternatives: All, MI and $\chi^2$. The null hypothesis is that the three alternatives have equal performance distributions across datasets. The results are shown in Table~\ref{tab:friedman_tests_eps0001_uniform}.

\begin{table}[t]
\centering
\scriptsize
\setlength{\tabcolsep}{8pt}
\fontsize{8}{10}\selectfont
\caption{Friedman tests comparing All, MI and $\chi^2$ across datasets for $\varepsilon=0.001$ and uniform calibration.}
\label{tab:friedman_tests_eps0001_uniform}
\begin{tabular}{llrr}
\toprule
Classifier & Metric & Friedman statistic & $p$-value \\
\midrule
GNB & Accuracy & 1.088 & 0.580 \\
GNB & Macro-F1 & 3.916 & 0.141 \\
LR & Accuracy & 0.377 & 0.828 \\
LR & Macro-F1 & 0.444 & 0.801 \\
\bottomrule
\end{tabular}
\vspace{1mm}
\end{table}

None of the Friedman tests rejects the null hypothesis at the $0.05$ significance level. Therefore, for both classifiers and both metrics, the observed differences among All, MI and $\chi^2$ are not statistically significant across the 18 datasets. Pairwise Wilcoxon signed-rank tests with Holm correction led to the same conclusion. All adjusted $p$-values were above $0.05$; the smallest adjusted value was obtained for the comparison between All and MI under GNB macro-F1. Hence, the apparent descriptive advantages observed in Table~\ref{tab:performance_eps0001_uniform} should not be interpreted as statistically significant improvements.

This result is central to the empirical interpretation of the method. The aim of the residual-overlap stopping rule is not to guarantee that the selected subset will outperform the full feature space in predictive accuracy. Rather, its purpose is to determine a calibrated truncation point that preserves predictive performance while producing a substantially smaller set of variables. From this perspective, the absence of significant differences with respect to All Features is a favourable result: the reduced subsets achieve statistically comparable performance while using only a small fraction of the original variables.

We also examined whether the two external ranking criteria, MI and $\chi^2$, produced statistically different predictive performance under the proposed stopping rule. Pairwise Wilcoxon signed-rank tests were applied across the 18 datasets, separately for each classifier and metric. No comparison between MI and $\chi^2$ was statistically significant. For GNB, MI obtained slightly higher mean accuracy than $\chi^2$ ($0.825$ vs. $0.805$), but the difference was not significant ($p\!=\!0.142$, Holm-adjusted $p\!=\!0.568$). For macro-F1 under GNB, the difference was also non-significant ($0.795$ vs. $0.779$, $p\!=\!0.538$). The same conclusion was obtained for LR, where MI and $\chi^2$ showed no significant differences in either accuracy ($p\!=\!0.850$) or macro-F1 ($p\!=\!0.408$).

Overall, these results support the main empirical claim of the paper. At the fixed target risk level $\varepsilon\!=\!0.001$, the residual-overlap stopping rule reduces the number of variables from tens of thousands to a few dozen, while maintaining predictive performance that is statistically indistinguishable from the all-features baseline in the reported tests. Moreover, MI and $\chi^2$ do not differ significantly from each other, suggesting that, within the two ranking criteria examined here, the stopping rule produced statistically comparable predictive performance. Thus, the principal empirical contribution is a calibrated and reproducible stopping criterion that achieves extreme dimensionality reduction without a detectable loss in accuracy or macro-F1.

\section{Conclusions} \label{sec:conclusions}

This work has introduced a calibrated stopping rule for ranking-based feature selection. The aim is to give a statistical interpretation to the point at which an existing feature ranking is truncated. Given a feature ranking, the proposed rule selects the shortest prefix whose residual product overlap is below a prescribed threshold for every relevant pairwise class contrast. Thus the retained cardinality is determined by a distributional calibration criterion, rather than by a fixed top-$q$ choice, an empirical cut-off, or cross-validation alone.

The construction is based on the exact factorisation of the Bhattacharyya coefficient under the product marginal model. This factorisation converts feature-wise class-conditional overlaps into accumulated log-separation scores, making the stopping rule both multiplicative in residual overlap and additive in marginal evidence. The residual-overlap threshold $\theta$ can be chosen from a target product-model all-pairs risk level $\varepsilon$, using either prior-dependent or prior-free calibration. More generally, pair-specific thresholds can be derived when the desired risk level is allocated differently across class contrasts.

The selected subset is class-independent. Pairwise class-conditional overlaps are used to monitor which contrasts have accumulated sufficient separation, but the output is a single feature subset shared by all classes. This makes the method compatible with standard multiclass pipelines and with applications in which a common interpretable representation is required. At the same time, the pairwise quantities computed by the rule provide useful diagnostic information: they identify the bottleneck contrasts and explain which variables are needed before simultaneous calibration is achieved.

The interpretation of the method is explicitly model-based. The calibration is exact for the labelled product marginal problem, not for arbitrary dependent joint distributions. Redundant variables may make the product overlap decrease too quickly, while discriminatory dependence structures with weak marginal signals may be missed. Likewise, the derived risk bounds concern ideal product-model all-pairs classifiers and should not be read as direct finite-sample guarantees for arbitrary downstream learners. These caveats do not undermine the method; rather, they delimit its scope as a transparent and reproducible stopping principle for accumulating marginal class-separation evidence.

The approach is particularly useful in supervised high-dimensional settings where a feature ranking is available and a compact set of original variables is desired, such as biomarker discovery, medical diagnostics, chemometric profiling and other interpretable classification tasks. 

Future work should investigate dependence-aware extensions, conservative and uncertainty-aware overlap estimation, resampling-based calibration, pair-specific threshold allocation, and empirical comparisons with cross-validated top-$q$ selection, stability selection, redundancy-aware filters and distributional feature-selection baselines.

\bibliographystyle{unsrt}
\bibliography{bibliography}

\newpage
\appendix

\section{A worked four-class example} \label{app:worked_four_class_example}

This appendix illustrates the stopping rule in a simple four-class setting. The example is artificial and is intended to illustrate three aspects of the method: accumulation of residual overlaps, selection of the first calibrated prefix, and interpretation of the residual-overlap threshold on the product-model risk scale.

Consider four classes $c_1,c_2,c_3,c_4$ and a ranked list of variables
\begin{equation*}
R=(v_1,v_2,\ldots,v_{100}).
\end{equation*}
There are six pairwise class contrasts. Suppose that the pairwise Bhattacharyya coefficients $\beta_v^{ij}$ have been estimated for every ranked variable $v\in\{v_1,\ldots,v_{100}\}$ and every class pair $i<j$. For compactness, Table~\ref{tab:worked_example_coefficients} displays only the first five ranked variables and the last ranked variable.

\begin{table}[!htbp]
\caption{Extract of the variable-wise Bhattacharyya coefficients for the four-class example. Coefficients are assumed to be available for all 100 ranked variables.}
\label{tab:worked_example_coefficients}
\begin{center}
\begin{tabular}{c|cccccc}
\toprule
 & $(c_1,c_2)$ & $(c_1,c_3)$ & $(c_1,c_4)$
 & $(c_2,c_3)$ & $(c_2,c_4)$ & $(c_3,c_4)$\\
\midrule
$v_1$ & 0.04 & 0.05 & 0.06 & 0.08 & 0.07 & 0.09\\
$v_2$ & 0.20 & 0.10 & 0.15 & 0.30 & 0.25 & 0.20\\
$v_3$ & 0.10 & 0.20 & 0.10 & 0.50 & 0.40 & 0.30\\
$v_4$ & 0.75 & 0.50 & 0.70 & 0.08 & 0.10 & 0.15\\
$v_5$ & 0.80 & 0.55 & 0.65 & 0.85 & 0.75 & 0.90\\
$\vdots$ & $\vdots$ & $\vdots$ & $\vdots$ & $\vdots$ & $\vdots$ & $\vdots$\\
$v_{100}$ & 0.99 & 0.96 & 0.95 & 0.97 & 0.92 & 0.94\\
\bottomrule
\end{tabular}
\end{center}
\end{table}

Small coefficients indicate strong marginal separation for the corresponding class contrast, whereas coefficients close to one indicate substantial overlap. For each prefix $T_q=\{v_1,\ldots,v_q\}$, the residual product overlap for contrast $(c_i,c_j)$ is obtained by multiplying the corresponding coefficients up to $q$:
\begin{equation*}
B_{ij}(T_q)=\prod_{\ell=1}^{q}\beta_{v_\ell}^{ij}.
\end{equation*}
Table~\ref{tab:worked_example_residuals} reports the resulting residual overlaps for the first five prefixes, which are sufficient to identify the stopping point in this example.

\begin{table}[!htbp]
\caption{Residual product overlaps along the first five prefixes.}
\label{tab:worked_example_residuals}
\begin{center}
\begin{tabular}{c|cccccc}
\toprule
$q$ & $B_{12}(T_q)$ & $B_{13}(T_q)$ & $B_{14}(T_q)$
& $B_{23}(T_q)$ & $B_{24}(T_q)$ & $B_{34}(T_q)$\\
\midrule
0 & $1$ & $1$ & $1$ & $1$ & $1$ & $1$\\
1 & $4.00\times10^{-2}$ & $5.00\times10^{-2}$ & $6.00\times10^{-2}$
  & $8.00\times10^{-2}$ & $7.00\times10^{-2}$ & $9.00\times10^{-2}$\\
2 & $8.00\times10^{-3}$ & $5.00\times10^{-3}$ & $9.00\times10^{-3}$
  & $2.40\times10^{-2}$ & $1.75\times10^{-2}$ & $1.80\times10^{-2}$\\
3 & $8.00\times10^{-4}$ & $1.00\times10^{-3}$ & $9.00\times10^{-4}$
  & $1.20\times10^{-2}$ & $7.00\times10^{-3}$ & $5.40\times10^{-3}$\\
4 & $6.00\times10^{-4}$ & $5.00\times10^{-4}$ & $6.30\times10^{-4}$
  & $9.60\times10^{-4}$ & $7.00\times10^{-4}$ & $8.10\times10^{-4}$\\
5 & $4.80\times10^{-4}$ & $2.75\times10^{-4}$ & $4.10\times10^{-4}$
  & $8.16\times10^{-4}$ & $5.25\times10^{-4}$ & $7.29\times10^{-4}$\\
  $\vdots$ & $\vdots$ & $\vdots$ & $\vdots$ & $\vdots$ & $\vdots$ & $\vdots$\\
\bottomrule
\end{tabular}
\end{center}
\end{table}

Let the common residual-overlap threshold be $\theta=10^{-3}$. The empty prefix has residual overlap equal to one for every contrast. After the first two variables, all six residual overlaps are still above $\theta$, so $T_2=\{v_1,v_2\}$ is not calibrated.

After adding $v_3$, the three contrasts involving $c_1$ have reached the threshold:
\begin{equation*}
B_{12}(T_3)=8.00\times10^{-4},\qquad B_{13}(T_3)=1.00\times10^{-3},\qquad B_{14}(T_3)=9.00\times10^{-4}.
\end{equation*}
However, the remaining contrasts are still above the threshold:
\begin{equation*}
B_{23}(T_3)=1.20\times10^{-2},\qquad B_{24}(T_3)=7.00\times10^{-3},\qquad B_{34}(T_3)=5.40\times10^{-3}.
\end{equation*}
Thus $T_3$ is not yet a calibrated prefix. The stopping rule cannot stop because the threshold must be satisfied simultaneously for all pairwise class contrasts.

After adding $v_4$, all six residual overlaps are below the common threshold:
\begin{equation*}
B_{12}(T_4)=6.00\times10^{-4},\qquad B_{13}(T_4)=5.00\times10^{-4},\qquad B_{14}(T_4)=6.30\times10^{-4}.
\end{equation*}
\begin{equation*}
B_{23}(T_4)=9.60\times10^{-4},\qquad B_{24}(T_4)=7.00\times10^{-4},\qquad B_{34}(T_4)=8.10\times10^{-4}.
\end{equation*}
Therefore, the first calibrated prefix is
\begin{equation*}
T^\star=T_4=\{v_1,v_2,v_3,v_4\}.
\end{equation*}
Although $v_5$ would further reduce the residual overlaps, it is not selected because the stopping rule returns the first prefix satisfying all pairwise threshold conditions.

The same example also illustrates the risk calibration. Assume equal class priors:
\begin{equation*}
\pi_1=\pi_2=\pi_3=\pi_4=\frac{1}{4}.
\end{equation*}
Since there are six class pairs,
\begin{equation*}
S_\pi=\sum_{1\le i<j\le4}\sqrt{\pi_i\pi_j}=6\sqrt{\frac{1}{4}\frac{1}{4}}=\frac{3}{2}.
\end{equation*}
If the operational threshold is fixed at $\theta=10^{-3}$, the corresponding product-model all-pairs risk level is
\begin{equation*}
\varepsilon(\theta)=\theta S_\pi=10^{-3}\cdot\frac{3}{2}=1.5\times10^{-3}.
\end{equation*}
Consequently, the selected prefix satisfies the product-model bound
\begin{equation*}
R_{\mathrm{AP}}^{\star,\otimes}(T^\star)\le 1.5\times10^{-3}.
\end{equation*}

Conversely, if the target all-pairs risk level is specified first as $\varepsilon=10^{-3}$, the prior-dependent calibration gives
\begin{equation*}
\theta_\varepsilon=\frac{10^{-3}}{3/2}=6.67\times10^{-4}.
\end{equation*}
This threshold is stricter than $10^{-3}$. At $q=4$, some residual overlaps are still above $6.67\times10^{-4}$, so the prefix $T_4$ would not yet be calibrated under this smaller target risk level. The rule would therefore continue further along the ranking, provided later variables supply enough additional residual-overlap reduction.

The example shows how the pairwise quantities can be used diagnostically. The contrasts involving $c_1$ reach the threshold at $q=3$, whereas the contrasts $(c_2,c_3)$, $(c_2,c_4)$ and $(c_3,c_4)$ only reach it at $q=4$. These latter contrasts are the bottlenecks that determine the global stopping point. The output of the method nevertheless remains class-independent: the selected representation is the single subset $T^\star=\{v_1,v_2,v_3,v_4\}$, shared by all classes.

The example has used a common residual-overlap threshold $\theta$ for all class pairs. The framework also permits pair-specific thresholds $\theta_{ij}$, with stopping condition
\begin{equation*}
B_{ij}(T_q)\le \theta_{ij},
\qquad
1\le i<j\le k.
\end{equation*}
This allows stricter tolerances for scientifically, clinically or operationally important contrasts, while assigning less stringent tolerances to other pairs. The product-model all-pairs risk bound is controlled whenever
\begin{equation*}
\sum_{1\le i<j\le k}\sqrt{\pi_i\pi_j}\,\theta_{ij}\le \varepsilon.
\end{equation*}
Thus, the uniform rule $\theta_{ij}=\theta$ is a convenient special case, while the pair-specific version distributes the same risk budget unevenly across class contrasts.

\end{document}